\def\floor#1{\lfloor #1 \rfloor}
\def\1{\bm{1}}
\DeclareMathAlphabet{\mathsfit}{\encodingdefault}{\sfdefault}{m}{sl}
\SetMathAlphabet{\mathsfit}{bold}{\encodingdefault}{\sfdefault}{bx}{n}
\DeclareMathOperator*{\argmin}{arg\,min}
\let\classAND\AND
\let\AND\relax
\let\AND\classAND
\newcommand{\iidsim}{\overset{\text{i.i.d.}}{\sim}}
\newcommand\numberthis{\addtocounter{equation}{1}\tag{\theequation}}
\theoremstyle{definition}
\theoremstyle{plain}
\newtheorem{theorem}{Theorem}
\theoremstyle{plain}
\newtheorem{lemma}{Lemma}
\theoremstyle{plain}
\newtheorem{assump}{Assumption}
\theoremstyle{plain}
\theoremstyle{plain}
\theoremstyle{remark}
\theoremstyle{remark}
\title{L-SVRG and L-Katyusha with Adaptive Sampling}
\author{\name Boxin Zhao \email boxinz@uchicago.edu \\
      \name Boxiang Lyu \email blyu@chicagobooth.edu \\
      \name Mladen Kolar \email mkolar@chicagobooth.edu\\
      \addr The University of Chicago Booth School of Business
      }
\begin{document}

\maketitle

\begin{abstract}
Stochastic gradient-based optimization methods, such as L-SVRG and its accelerated variant L-Katyusha~\citep{Kovalev2020Dont}, are widely used to train machine learning models. The theoretical and empirical performance of L-SVRG and L-Katyusha can be improved by sampling observations from a non-uniform distribution~\citep{Qian2021L}. However, designing a desired sampling distribution requires prior knowledge of smoothness constants, which can be computationally intractable to obtain in practice when the dimension of the model parameter is high. To address this issue, we propose an adaptive sampling strategy for L-SVRG and L-Katyusha that can learn the sampling distribution with little computational overhead, while allowing it to change with iterates, and at the same time does not require any prior knowledge of the problem parameters. We prove convergence guarantees for L-SVRG and L-Katyusha for convex objectives when the sampling distribution changes with iterates. Our results show that even without prior information, the proposed adaptive sampling strategy matches, and in some cases even surpasses, the performance of the sampling scheme in~\citet{Qian2021L}. Extensive simulations support our theory and the practical utility of the proposed sampling scheme on real data.
\end{abstract}

\section{Introduction}
We aim to minimize the following finite-sum problem:
\begin{equation}
\label{eq:ERM-obj}
\min_{x \in \mathbb{R}^d} F(x) \coloneqq \frac{1}{n} \sum^n_{i=1} f_i (x),
\end{equation}
where each $f_i$ is convex, differentiable, and $L_i$-smooth -- see Assumptions~\ref{assump:cvx} and~\ref{assump:smooth} in Section~\ref{sec:convg-analysis}. The minimization problem in \eqref{eq:ERM-obj} is ubiquitous in machine learning applications, where $f_i(x)$ typically represents the loss function on the $i$-th data point of a model parameterized by $x$. We denote the solution to~\eqref{eq:ERM-obj} as $x^{\star}$. However, due to computational concerns, it is typically solved via a first-order method \citep{bottou2018optimization}. When the sample size $n$ is large, computing the full gradient $\nabla F(x)$ can be computationally expensive, and stochastic first-order methods, such as stochastic gradient descent (SGD) \citep{Robbins1951stochastic}, are the modern tools of choice for minimizing \eqref{eq:ERM-obj}.

Since SGD iterates cannot converge to the minimizer without decreasing the stepsize due to nonvanishing variance, a number of variance-reduced methods have been proposed, such as SAG~\citep{schmidt2017minimizing}, SAGA~\citep{defazio2014saga}, SVRG~\citep{johnson2013accelerating}, and Katyusha~\citep{AllenZhu2017Katyusha}. Such methods can converge to the optimum of \eqref{eq:ERM-obj} even with a constant stepsize. In this paper, we focus on L-SVRG and L-Katyusha \citep{Kovalev2020Dont}, which improve on SVRG and Katyusha by removing the outer loop in these algorithms and replacing it with a biased coin-flip. This change simplifies parameter selection, leads to better practical performance, and allows for clearer theoretical analysis.

Stochastic first-order methods use a computationally inexpensive estimate of the full gradient $\nabla F(x)$ when minimizing $\eqref{eq:ERM-obj}$. For example, at the beginning of round $t$, SGD randomly selects $i_t \in [n]$ according to a sampling distribution $\mathbf{p}^t$ over $[n]$, and forms an unbiased estimate $\nabla f_{i_t}(x)$ of $\nabla F(x)$. Typically, the sampling distribution $\mathbf{p}^t$ is the uniform distribution, $\mathbf{p}^t=(1/n, \cdots, 1/n)$, for all $t$. However, using a non-uniform sampling distribution can lead to faster convergence \citep{Zhao2015Stochastic,Needell2016Stochastic,Qian2019SAGA,Hanzely2019Accelerated,Qian2021L}. For instance, when the sampling distribution is $\mathbf{p}^{IS} = (p^{IS}_1, \cdots, p^{IS}_n)$, with $p^{IS}_i = L_i / (\sum^n_{i=1} L_i)=L_i / (n \bar{L})$, the convergence rate of L-SVRG and L-Katyusha can be shown to depend on the \emph{average smoothness} $\bar{L} \coloneqq (1/n)\sum^n_{i=1}L_i$, instead of the \emph{maximum smoothness} $L_{\max}\coloneqq\max_{1\leq i \leq n} L_i$ \citep{Kovalev2020Dont}. Sampling from a non-uniform distribution is commonly referred to as importance sampling (IS).

While sampling observations from $\mathbf{p}^{IS}$ can improve the speed of convergence, $\mathbf{p}^{IS}$ depends on the smoothness constants $\{L_i\}_{i \in [n]}$. In general, these constants are not known in advance and need to be estimated, for example, by computing $\sup_{x \in \mathbb{R}^d} \lambda_{\max} (\nabla^2 f_i(x))$, $i \in [n]$, where $\lambda_{\max}(\cdot)$ denotes the largest eigenvalue of a matrix. However, when the dimension $d$ is large, it is computationally prohibitive to estimate the smoothness constants, except in some special cases such as linear and logistic regression. In this paper, we develop a method to design a sequence of sampling distributions that leads to the convergence rate of L-SVRG and L-Katyusha that depends on $\bar{L}$, instead of $L_{\max}$, without prior knowledge of $\{L_i\}_{i \in [n]}$.

Instead of designing a \emph{fixed sampling distribution}, where $\mathbf{p}^t\equiv\mathbf{p}$ for all $t$, we design a \emph{dynamic sampling distribution} that can change with iterations of the optimization algorithm. We follow a recent line of work that formulates the design of the sampling distribution as an online learning problem~\citep{Salehi2017Stochastic,Borsos2019Online,Namkoong2017Adaptive,Hanchi2020Adaptive,Zhao2021Adaptivea}. Using the gradient information obtained in each round, we update the sampling distribution with minimal computational overhead. This sampling distribution is subsequently used to adaptively sample the observations used to compute the stochastic gradient. When the sequence of designed distributions is used for importance sampling, we prove convergence guarantees for L-SVRG, under both strongly convex and weakly convex settings, and for L-Katyusha under the strongly convex setting. These convergence guarantees show that it is possible to design a sampling distribution that not only performs as well as $\mathbf{p}^{IS}$ but can also improve over it without using prior information. We focus on comparing with $\mathbf{p}^{IS}$ as it is the most widely used fixed sampling distribution~\citep{Qian2021L} and leads to the best-known convergence rates with fixed sampling distribution~\citep{Zhao2015Stochastic,Needell2016Stochastic}.

\textbf{Contributions.} Our paper makes the following contributions. We propose an adaptive sampling algorithm for L-SVRG and L-Katyusha that does not require prior information, such as smoothness constants. This is the first practical sampling strategy for these algorithms. We prove convergence guarantees for L-SVRG under both strong and weak convexity, and for L-Katyusha under strong convexity, using a sequence of sampling distributions that changes with iterations. These theoretical results show when the sequence of sampling distributions performs as well as $\mathbf{p}^{IS}$, and even outperforms it in some cases. Our numerical experiments support these findings. We also show that the control variate technique in SVRG and adaptive sampling reduce variance from different aspects, as demonstrated in a simulation. We conduct extensive simulations to provide empirical support for various aspects of our theory and real data experiments to demonstrate the practical benefits of adaptive sampling. Given its low computational cost and superior empirical performance, we suggest that our adaptive sampling should be considered as the default alternative to the uniform sampling used in L-SVRG and L-Katyusha.

{\bf Related work.} 
Our paper contributes to the literature on non-uniform sampling in first-order stochastic optimization methods. Previous work, such as \citet{Zhao2015Stochastic}, \citet{Needell2016Stochastic}, and \citet{Qian2021L}, studied non-uniform sampling in SGD, stochastic coordinate descent, and L-SVRG and L-Katyusha, respectively, but focused on sampling from a fixed distribution. In contrast, we allow the sampling distribution to change with iterates, which is important as the best sampling distribution changes with iterations. \citet{shen2016adaptive} studied adaptive sampling methods for variance-reducing stochastic methods, such as SVRG and SAGA, but their approach requires computing all gradients $\{\nabla f_i(x^t)\}^n_{i=1}$ at each step, which is impractical. Our method only requires computing the stochastic gradient $\nabla f_{i_t}(x^t)$. The sampling distribution can be designed adaptively using an online learning framework \citep{Namkoong2017Adaptive,Salehi2017Stochastic,borsos2018online,Borsos2019Online,Hanchi2020Adaptive,Zhao2021Adaptivea}. We call this process adaptive sampling, and its goal is to minimize the cumulative sampling variance, which appears in the convergence rates of L-SVRG and L-Katyusha (see Section~\ref{sec:convg-analysis}). 
More specifically, \citet{Namkoong2017Adaptive} and \citet{Salehi2017Stochastic} designed the sampling distribution by solving a multi-armed bandit problem with the EXP3 algorithm. \citet{borsos2018online} took an online convex optimization approach and made updates to the sampling distribution using the follow-the-regularized-leader algorithm. \citet{Borsos2019Online} considered the class of distributions that is a linear combination of a set of given distributions and used an online Newton method to update the weights. \citet{Hanchi2020Adaptive} and \citet{Zhao2021Adaptivea} investigated non-stationary approaches to learning sampling distributions. 
Among these works, \citet{Zhao2021Adaptivea} is the only one that compared their sampling distribution to a dynamic comparator that can change with iterations without requiring stepsize decay. While our theory quantifies the effect of any sampling distribution on the convergence rate of L-SVRG and L-Katyusha, we use the OSMD sampler and AdaOSMD sampler from~\cite{Zhao2021Adaptivea}, as they lead to the best upper bound and yield the best empirical performance.

\textbf{Notation.} For a positive integer $n$, let $[n]\coloneqq \{ 1,\cdots,n \}$. We use $\Vert \cdot \Vert$ to denote the $l_2$-norm in the  Euclidean space. Let $\mathcal{P}_{n-1} = \left\{ x \in \mathbb{R}^n: \sum^n_{i=1} x_i = 1, x_j \geq 0, j \in [n] \right\}$ be the $(n-1)$-dimensional simplex. For a symmetric matrix $A \in \mathbb{R}^{d\times d}$, we use $\lambda_{\max}(A)$ to denote its largest eigenvalue. For a vector $x \in \mathbb{R}^d$, we use $x_j$ or $x[j]$ to denote its $j$-th entry. For two sequences $\{a_n\}$ and $\{b_n\}$, 
$a_n=O(b_n)$ if there exists $C>0$ such that $|a_n/b_n| \leq C$
for all $n$ large enough; 
$a_n=\Theta (b_n)$ if $a_n=O(b_n)$ and $b_n=O(a_n)$ simultaneously.

\textbf{Organization of the paper.}  In Section~\ref{sec:lsvrg-lkatyusha-as}, we introduce the algorithm for designing the sampling distribution. In Section~\ref{sec:convg-analysis}, we give the convergence analysis. Extensive simulations that demonstrate various aspects of our theory are given in Section~\ref{sec:sync-data-exp}.
Section~\ref{sec:real-data-exp} illustrates an application to real world data. Finally, we conclude the paper with Section~\ref{sec:conclusion}.

\section{AS-LSVRG and AS-LKatyusha}
\label{sec:lsvrg-lkatyusha-as}
To solve~\eqref{eq:ERM-obj} using SGD, one iteratively samples $i_t$ uniformly at random from $[n]$ and updates the model parameter by $x^{t+1} \leftarrow x^t - \eta_t \nabla f_{i_t}(x^t)$. 
However, due to the non-vanishing variance $\mathbb{V}[\nabla f_{i_t}(x^t)]$, $x^t$ cannot converge to $x^{\star}$ unless one adopts a diminishing step size by letting $\eta_t \to 0$. 
To address this issue, L-SVRG~\citep{Kovalev2020Dont} constructs an adjusted estimated of the gradient  $g^t=\nabla f_{i_t}(x^t) - \nabla f_{i_t}(w^t) + \nabla F(w^t)$, where $w^t$ is a control variate that is updated to $x^t$ with probability $\rho$ in each iteration.
Note that $g^t$ is still an unbiased estimate of $\nabla F(x^t)$.
Since both $x^t$ and $w^t$ converge to $x^{\star}$, we have $\mathbb{V}[g^t] \to 0$, and thus $x^t$ can converge to $x^{\star}$ even with a constant step size.
L-Katyusha incorporates a Nesterov-type acceleration to improve the dependency of the computational complexity on the condition number under the strongly convex setting \citep{Kovalev2020Dont}.

\citet{Qian2021L} investigated sampling $i_t$ from $[n]$ using a non-uniform sampling distribution to achieve faster convergence. 
Given the model parameter $x^t$ at iteration $t$, suppose that 
$i_t$ is sampled from the distribution $\mathbf{p}^t=(p^t_1,\ldots,p^t_n)$. 
Then
\begin{equation*}
g^t = \frac{1}{n p^t_{i_t}} \left( \nabla f_{i_t}(x^t) - \nabla f_{i_t}(w^t) \right) + \nabla F(w^t)
\end{equation*}
is an unbiased estimate of $\nabla F(x^t)$.
The variance of $g^t$ is
\begin{equation*}
\mathbb{V} \left[ g^t \right] = V_e^t \left( \mathbf{p}^t \right) - \left\Vert \nabla F (x^t) - \nabla F (w^t) \right\Vert^2,
\end{equation*}
where
\begin{equation}
\label{eq:samp-eff-var}
V_e^t \left( \mathbf{p}^t \right) \coloneqq \frac{1}{n^2} \sum^n_{i=1} \frac{1}{p^t_i} \left\Vert \nabla f_i (x^t) - \nabla f_i (w^t) \right\Vert^2.
\end{equation}
We let $V^t \left( \mathbf{p}^t \right) \coloneqq \mathbb{V} \left[ g^t \right]$ be the \emph{sampling variance} of the sampling distribution $\mathbf{p}^t$, and $V_e^t \left( \mathbf{p}^t \right)$ be the effective variance.
Therefore, in order to minimize the variance of $g^t$, we can choose $\mathbf{p}^t$ to minimize $V_e^t ( \mathbf{p}^t )$. Let $\mathbf{p}^t_{\star} = \argmin_{\mathbf{p} \in \mathcal{P}_{n-1}} V^t_e \left( \mathbf{p}^t \right)$ be the oracle optimal dynamic sampling distribution at the $t$-th iteration, which has the closed form
\begin{equation}\label{eq:opt-dyn-samp}
p^t{\star,i} = \frac{ \Vert \nabla f_i (x^t) - \nabla f_i (w^t) \Vert }{ \sum^n_{j=1} \Vert \nabla f_j (x^t) - \nabla f_j (w^t) \Vert },
\quad i \in [n].
\end{equation}
However, we cannot compute $\mathbf{p}^t_{\star}$ in each iteration, since computing it requires knowledge of all $\{\nabla f_i (x^t)\}^n_{i=1}$ and $\{\nabla f_i (w^t)\}^n_{i=1}$. If that were the case, we could simply use full-gradient descent, and there would be no need for either sampling or control variate. Therefore, some kind of approximation of $\mathbf{p}^t_{\star}$ is unavoidable for practical purposes.

\citet{Qian2021L} proposed substituting each $\Vert \nabla f_i (x^t) - \nabla f_i (w^t) \Vert$ with its upper bound. 
Based on the smoothness assumption (Assumption~\ref{assump:smooth} in Section~\ref{sec:convg-analysis}), we have $\Vert \nabla f_i (x^t) - \nabla f_i (w^t) \Vert \leq L_i \Vert x^t - w^t \Vert$. 
Thus, by substituting $\Vert \nabla f_i (x^t) - \nabla f_i (w^t) \Vert$ with $L_i \Vert x^t - w^t \Vert$ in~\eqref{eq:samp-eff-var}, we obtain an approximate sampling distribution $\mathbf{p}^{IS} = (p^{IS}_1, \cdots, p^{IS}_n)$, with $p^{IS}_i = L_i / (\sum^n_{i=1} L_i)=L_i / (n \bar{L})$. 
L-SVRG and L-Katyusha that use $\mathbf{p}^{IS}$ can achieve faster convergence compared to using uniform sampling \citep{Qian2021L}. 
However, one difficulty of applying $\mathbf{p}^{IS}$ in practice is that we need to know $L_i$ for all $i=1,\ldots,n$. 
While such information can be easy to access in some cases, such as in linear and logistic regression problems, it is generally hard to estimate, especially when the dimension of the model parameter is high. 
To circumvent this problem, recent work has formulated the design of the sampling distribution as an online learning problem~\citep{Salehi2017Stochastic,Borsos2019Online,Namkoong2017Adaptive,Hanchi2020Adaptive,Zhao2021Adaptivea}. 
More specifically, at each iteration $t$, after sampling $i_t$ with sampling distribution $\mathbf{p}^t$, we can receive information about $\Vert \nabla f_{i_t} (x^t) - \nabla f_{i_t} (w^t) \Vert$. 
Although we cannot have $\Vert \nabla f_i (x^t) - \nabla f_i (w^t) \Vert$ for all $i=1,\ldots,n$, the partial information obtained from $\{\Vert \nabla f_{i_s} (x^s) - \nabla f_{i_s} (w^s) \Vert\}^t_{s=0}$ and $\{\mathbf{p}^s\}^t_{s=0}$ is helpful in constructing the sampling distribution $\mathbf{p}^{t+1}$ to minimize $V_e^t ( \mathbf{p}^t )$. 
In this paper, we adapt the methods proposed in~\cite{Zhao2021Adaptivea} for L-SVRG and L-Katyusha and apply them in our experiments; however, our analysis is not restrictive to this choice and can fit other methods as well.

\begin{algorithm}[t]
\caption{AS-LSVRG}
\label{alg:lsrvg-as}
\begin{algorithmic}[1]
\STATE {\bfseries Input:} stepsizes $\{\eta\}_{t \geq 1}$, $\rho \in (0,1]$. 
\STATE {\bfseries Initialize:} $x^0=w^0$; $\mathbf{p}^0=(1/n,\cdots,1/n)$. 
\FOR{$t=0,1,\cdots,T-1$}
\STATE Sample $i_t$ from $[n]$ with  $\mathbf{p}^t=(p^t_1,\cdots,p^t_n)$.
\STATE $g^t=\frac{1}{n p^t_{i_t}} \left( \nabla f_{i_t}(x^t) - \nabla f_{i_t}(w^t) \right) + \nabla F(w^t)$.
\STATE $x^{t+1} = x^t - \eta_t g^t$.
\STATE $
w^{t+1} = 
\begin{cases}
x^t & \text{with probability } \rho, \\
w^t & \text{with probability } 1-\rho. \\
\end{cases}
$
\STATE Update $\mathbf{p}^t$ to $\mathbf{p}^{t+1}$ by OSMD sampler (Algorithm~\ref{alg:OSMD-sampler}) or AdaOSMD sampler (Algorithm~\ref{alg:adap-OMSD-sampler-meta}). \label{alg-step:update-prob-lsvrg}
\ENDFOR
\end{algorithmic}
\end{algorithm}

\begin{algorithm}[t]
\caption{AS-LKatyusha}
\label{alg:lkatyusha-as}
\begin{algorithmic}[1]
\STATE {\bfseries Input:} stepsizes $\{\eta\}_{t \geq 1}$, $\rho \in (0,1]$, $\theta_1,\theta_2 \in [0,1]$, $0<\kappa<1$, $L>0$.
\STATE {\bfseries Initialize:} $v^0=w^0=z^0$.
\FOR{$t=0,1,\cdots,T-1$}
\STATE $x^t=\theta_1 z^t + \theta_2 w^t + (1-\theta_1-\theta_2) v^t$.
\STATE Sample $i_t$ from $[n]$ with  $\mathbf{p}^t=(p^t_1,\cdots,p^t_n)$.
\STATE $g^t=\frac{1}{n p^t_{i_t}} \left( \nabla f_{i_t}(x^t) - f_{i_t}(w^t) \right) + F(w^t)$.
\STATE $z^{t+1} = \frac{1}{1+\eta_t \kappa} \left( \eta_t \kappa x^t + z^t - \frac{\eta_t}{L} g^t \right)$
\STATE $v^{t+1} = x^t + \theta_1 (z^{t+1}-z^t)$.
\STATE $
w^{t+1} = 
\begin{cases}
v^t & \text{with probability } \rho, \\
w^t & \text{with probability } 1-\rho. \\
\end{cases}
$
\STATE Update $\mathbf{p}^t$ to $\mathbf{p}^{t+1}$ by OSMD sampler (Algorithm~\ref{alg:OSMD-sampler}) or AdaOSMD sampler (Algorithm~\ref{alg:adap-OMSD-sampler-meta}). \label{alg-step:update-prob-lkatyusha}
\ENDFOR
\end{algorithmic}
\end{algorithm}

We introduce our modifications of L-SVRG and L-Katyusha that use adaptive sampling, namely Adaptive Sampling L-SVRG (AS-LSVRG, Algorithm~\ref{alg:lsrvg-as}) and Adaptive Sampling L-Katyusha (AS-LKatyusha, Algorithm~\ref{alg:lkatyusha-as}).
The key change here is that instead of using a fixed sampling distribution $\mathbf{p}^t \equiv \mathbf{p}$, $t \geq 0$, we allow the sampling distribution to change with iterations and adaptively learn it.
More specifically, Step~\ref{alg-step:update-prob-lsvrg} of Algorithm~\ref{alg:lsrvg-as} and Step~\ref{alg-step:update-prob-lkatyusha} of Algorithm~\ref{alg:lkatyusha-as} use OSMD sampler or AdaOSMD sampler~\citep{Zhao2021Adaptivea} to update the sampling distribution, which are described in Algorithm~\ref{alg:OSMD-sampler} and Algorithm~\ref{alg:adap-OMSD-sampler-meta}, respectively.
While the OSMD sampler and AdaOSMD sampler allow for choosing a mini-batch of samples in each iteration, here we focus on choosing only one sample in each iteration. We choose $\Phi$ to be the unnormalized negative entropy, that is, $\Phi(x)=\sum^n_{i=1} x_i \log x_i - \sum^n_{i=1}x_i$, $x=(x_1,\ldots,x_n)^{\top} \in [0,\infty)^n$, with $0\log0$ defined as $0$.
Additionally, $D_{\Phi} \left(x \,\Vert\, y\right) = \Phi (x) - \Phi (y) - \langle \nabla \Phi (y), x - y \rangle$ is the Bregman divergence between any $x,y \in (0,\infty)^n$ with respect to the function $\Phi$.

\begin{algorithm}[t]
\caption{OSMD sampler}
\label{alg:OSMD-sampler}
\begin{algorithmic}[1]
\STATE {\bfseries Input:} Learning rate $\eta$; parameter $\alpha \in (0,1]$, $\mathcal{A}=\mathcal{P}_{M-1}\cap[\alpha/M,\infty)^M$; number of iterations $T$.
\STATE {\bfseries Output:} $\mathbf{p^t}$ for $t=1,\dots,T$.
\STATE {\bfseries Initialize:} $\mathbf{p}^1=(1/n,\ldots,1/n)$.
\FOR{$t=1,2,\dots,T-1$}{
\STATE Sample $i_t$ from $[n]$ by $\mathbf{p^t}$. Let $a^t_{i_t}=\Vert \nabla f_{i_t}(x^t) - \nabla f_{i_t}(w^t) \Vert^2$.
\STATE Compute the sampling loss gradient estimate $\nabla \hat{V}^t_e (\mathbf{p}^t ) \in \mathbb{R}^n$: all entries are zero except for the $i_t$-th entry, which is
\begin{equation}
\label{eq:sampling-loss-gradient-estimate-2}
\left[ \nabla \hat{V}^t_e (\mathbf{p}^t ) \right]_{i_t} = -\frac{1}{n^2}\cdot \frac{a^t_{i_t}}{  (p^t_{i_t})^3 }.
\end{equation}
\STATE
Solve $\displaystyle \mathbf{p}^{t+1} = \arg\min_{ \mathbf{p} \in \mathcal{A} } \, \eta \langle \mathbf{p}, \nabla \hat{V}^t_e (\mathbf{p}^t ) \rangle + D_{\Phi} \left( \mathbf{p} \,\Vert\, \mathbf{p}^t  \right)$ using Algorithm~\ref{alg:OMSD-sampler-solver} with the learning rate $\eta$. \label{alg:solve-mirror-descent}
}
\ENDFOR
\end{algorithmic}
\end{algorithm}

\begin{algorithm}[t]
\caption{AdaOSMD sampler}
\label{alg:adap-OMSD-sampler-meta}
\begin{algorithmic}[1]
\STATE {\bfseries Input:} Meta-algorithm learning rate $\gamma$; expert learning rates  $\mathcal{E} = \{ \eta_1 \leq \eta_2 \leq \dots \leq \eta_H \}$; $\alpha \in (0,1]$;  $\mathcal{A}=\mathcal{P}_{n-1}\cap[\alpha/n,\infty)^n$. Number of iterations $T$.
\STATE {\bfseries Output:} $\mathbf{p^t}$ for $t=1,\dots,T$.
\STATE Set $\theta^1_h = (1 + 1/H) / (h(h+1))$, $h\in[H]$.
\STATE {\bfseries Initialize:} $\mathbf{p}^1_h=(1/n,\ldots,1/n)$ for $h \in [H]$.
\FOR{$t=1,2,\dots,T-1$}
% \STATE Receive $\mathbf{p}^t_e$ from Expert Algorithm~\ref{alg:adap-OMSD-sampler-expert} for all $e \in [E]$
\STATE Compute $\mathbf{p}^t = \sum^H_{h=1} \theta^t_h \mathbf{p}^t_h $.
\STATE Sample $i_t$ from $[n]$ by $\mathbf{p^t}$. Let $a^t_{i_t}=\Vert \nabla f_{i_t}(x^t) - \nabla f_{i_t}(w^t) \Vert^2$.
\FOR{$h=1,2,\ldots,H$}{
\STATE Compute the sampling loss estimate
\begin{equation}
\label{eq:sampling-loss-estimate}
\hat{V}^t_e (\mathbf{p}^t_h ; \mathbf{p}^t)=\frac{1}{n^2}\cdot \frac{a^t_{i_t}}{ p^t_{i_t} p^t_{h,{i_t}} }.
\end{equation}
\STATE Compute the sampling loss gradient estimate $\nabla \hat{V}^t_e (\mathbf{p}^t_h ; \mathbf{p}^t) \in \mathbb{R}^n$: all entries are zero except for the $i_t$-th entry, which is
\begin{equation}
\label{eq:sampling-loss-gradient-estimate}
\left[ \nabla \hat{V}^t_e (\mathbf{p}^t_h ; \mathbf{p}^t) \right]_{i_t} = -\frac{1}{n^2}\cdot \frac{a^t_{i_t}}{ p^t_{i_t} (p^t_{h,{i_t}})^2 }.
\end{equation}
\STATE Solve $\mathbf{p}^{t+1}_h = \argmin_{ \mathbf{p} \in \mathcal{A} } \, \eta_h \langle \mathbf{p},\nabla \hat{V}^t_e (\mathbf{p}^t_h ; \mathbf{p}^t)  \rangle + D_{\Phi} \left( \mathbf{p} \,\Vert\, \mathbf{p}^t_h  \right) $ using Algorithm~\ref{alg:OMSD-sampler-solver} with the learning rate~$\eta_h$. \label{alg-step:solve-mirror-descent}
}
\ENDFOR
\STATE Update the weights of each expert
\begin{equation*}
\theta^{t+1}_h = \frac{ \theta^t_h \exp \left\{ - \gamma \hat{V}^t_e (\mathbf{p}^t_h ; \mathbf{p}^t) \right\} }{ \sum^H_{h=1} \theta^t_h \exp \left\{ - \gamma \hat{V}^t_e (\mathbf{p}^t_h ; \mathbf{p}^t) \right\} },
\qquad h \in [H].
\end{equation*}
\ENDFOR
\end{algorithmic}
\end{algorithm}

The key insight of the OSMD Sampler is to use Online Stochastic Mirror Descent~\citep{Lattimore2020Bandit} to minimize the cumulative sampling loss $\sum^T_{t=1} V^t_e(\mathbf{p}^t)$, where $V^t_e(\mathbf{p}^t)$ is defined in\eqref{eq:samp-eff-var}. 
To apply OSMD, we first construct an unbiased estimate of the gradient of $V^t_e(\mathbf{p}^t)$, which is shown in~\eqref{eq:sampling-loss-gradient-estimate-2}. Then, in Step~\ref{alg:solve-mirror-descent}, we update the sampling distribution by taking a mirror descent. 
Intuitively, the optimization objective in Step~\ref{alg:solve-mirror-descent} involves two terms. The first term encourages the sampling distribution to fit the most recent history, while the second term ensures that it does not deviate too far from the previous decision. By choosing a learning rate $\eta$, we keep a trade-off between these two concerns. A larger learning rate implies a stronger fit towards the most recent history.
To automatically choose the best learning rate, AdaOSMD uses a set of expert learning rates and combines them using exponentially weighted averaging. Note that the total number of iterations $T$ is assumed to be known and used as an input to AdaOSMD. When the number of iterations $T$ is not known in advance, \citet{Zhao2021Adaptivea} proposed a doubling trick, which could also be used here.
The set of expert learning rates is given by
\begin{equation}
\label{eq:expert-rates-set}
\mathcal{E} \coloneqq \left\{ \left.  2^{h-1} \cdot \frac{\alpha^3 }{ n^3  \bar{a}^1  } \sqrt{ \frac{\log n}{2 T  } } \, \right\vert \, h=1,2,\dots,H \right\},
\end{equation}
where
\begin{equation}
\label{eq:expert-rates-set-length}
H = \floor{ \frac{1}{2} \log_2 \left( 1 + \frac{4 \log (n/\alpha)}{\log n}(T-1) \right)  } + 1.
\end{equation}
The learning rate in AdaOSMD is set to $\gamma = \frac{\alpha}{n}\sqrt{\frac{8}{T\bar{a}^1}}$, where $\bar{a}^1 = \max_{i \in [n]}\Vert \nabla f_i (x^0) \Vert$. For all experiments in this paper, we set $\alpha = 0.4$.

The main computational bottleneck of both the OSMD sampler and the AdaOSMD sampler is the mirror descent step. Fortunately, Step~\ref{alg:solve-mirror-descent} of Algorithm~\ref{alg:OSMD-sampler} and Step~\ref{alg-step:solve-mirror-descent} of Algorithm~\ref{alg:adap-OMSD-sampler-meta} can be efficiently solved by Algorithm~\ref{alg:OMSD-sampler-solver}.
The main cost of Algorithm~\ref{alg:OMSD-sampler-solver} comes from sorting the sequence $\{ \tilde{p}^{t+1}_i \}^n_{i=1}$, which can be done with the computational complexity of $O(n\log n)$. However, note that we only update one entry of $\mathbf{p}^t$ to get $\mathbf{\tilde{p}}^{t+1}$ and $\mathbf{p}^t$ is sorted in the previous iteration. Therefore, most entries of $\mathbf{\tilde{p}}^{t+1}$ are also sorted. Using this observation, we can usually achieve a much faster running time, for example, by using an adaptive sorting algorithm~\citep{estivill1992survey}.

\begin{algorithm}[t]
\caption{OSMD Solver: Solve $\mathbf{p}^{t+1} = \argmin_{\mathbf{q} \in \mathcal{A}} \eta \langle \mathbf{q}, \hat{\mathbf{u}}^t \rangle + D_{\Phi} (\mathbf{q} \, \Vert \, \mathbf{p}^t )$}
\label{alg:OMSD-sampler-solver}
\begin{algorithmic}[1]
\STATE {\bfseries Input:} $\mathbf{p^t}$, $\hat{\mathbf{u}}^t$, $\mathcal{A}=\mathcal{P}_{n-1}\cap[\alpha/n,\infty)^n$. Learning rate $\eta$.
\STATE {\bfseries Output:} $\mathbf{p^{t+1}}$.
\STATE Let $\tilde{p}^{t+1}_i = p^t_i \exp \left( - \eta \hat{u}^t_i \right)$ for $i \in [n]$.
\STATE Sort $\{ \tilde{p}^{t+1}_i \}^n_{i=1}$ in a non-decreasing order: $\tilde{p}^{t+1}_{\pi(1)}\leq\ldots\leq\tilde{p}^{t+1}_{\pi(n)}$.
\STATE Let $v_i=\tilde{p}^{t+1}_{\pi(i)} \left( 1 -  \frac{i-1}{n} \alpha \right)$ for $i \in [n]$.
\STATE Let $z_i=\frac{\alpha}{n} \sum^n_{j=i} \tilde{p}^{t+1}_{\pi(j)}$ for $i \in [n]$.
\STATE Find the smallest $i$ such that $v_i > z_i$, denoted as $i_{\star}$.
\STATE Let $p^{t+1}_i =
\begin{cases}
\alpha / n & \text{if } \pi(i) < i_{\star} \\
\left((1 - ((i_{\star}-1) / n) \alpha ) \tilde{p}^{t+1}_{i}\right)/\left(\sum^n_{j=i_{\star}} \tilde{p}^{t+1}_{\pi(j)}\right) & \text{otherwise}.
\end{cases}
$
\end{algorithmic}
\end{algorithm}

\section{Convergence analysis}
\label{sec:convg-analysis}
We provide convergence rates for AS-LSVRG (Algorithm~\ref{alg:lsrvg-as}) and AS-LKatyusha (Algorithm~\ref{alg:lkatyusha-as}), for any sampling distribution sequence $\{\mathbf{p}^t\}_{t\geq 0}$. We begin by imposing assumptions on the optimization problem in \eqref{eq:ERM-obj}.

\begin{assump}[Convexity]
\label{assump:cvx}
For each $i \in [n]$, the function $f_i(\cdot)$ is convex and first-order continuously differentiable:
\begin{equation*}
f_i(x) \geq f_i(y) + \langle \nabla f_i (y), x- y \rangle \quad \text{for all } x,y \in \mathbb{R}^d.
\end{equation*}
\end{assump}

\begin{assump}[Smoothness]
\label{assump:smooth}
For each $i \in [n]$, the function $f_i$ is $L_i$-smooth:
\begin{equation*}
\Vert \nabla f_i (x) - \nabla f_i (y) \Vert \leq L_i \Vert x - y \Vert \quad \text{for all } x,y \in \mathbb{R}^d.
\end{equation*}
Furthermore, the function $F$ is $L_F$-smooth:
\begin{equation*}
\Vert \nabla F (x) - \nabla F (y) \Vert \leq L_F \Vert x - y \Vert \quad \text{for all } x,y \in \mathbb{R}^d.
\end{equation*}
\end{assump}

Recall that $\bar{L}=(1/n)\sum^n_{i=1}L_i$ and $L_{\max}=\max_{1 \leq i \leq n} L_i$. By the convexity of $\Vert \cdot \Vert$ and Jensen's inequality, we have that $L_F \leq \bar{L}$. For some results, we will assume that $F$ is strongly convex.

\begin{assump}[Strong Convexity]
\label{assump:strong-cvx}
The function $F(\cdot)$ is $\mu$-strongly convex:
\begin{equation*}
F(x) \geq F(y) + \langle \nabla F (y), x- y \rangle + \frac{\mu}{2} \Vert x - y \Vert^2
\end{equation*}
for all $x,y \in \mathbb{R}^d$, where $\mu>0$.
\end{assump}
Additionally, the \emph{optimization heterogeneity} is defined as
\begin{equation}
\label{eq:opt-hetero}
\sigma^2_{\star} \coloneqq \frac{1}{n} \sum^n_{i=1} \Vert \nabla f_i (x^{\star}) \Vert^2,
\end{equation}
and the \emph{smoothness heterogeneity} is defined as $L_{\max} / \bar{L}$.

\subsection{Convergence analysis of AS-LSVRG}
\label{sec:convg-as-lsvrg}

We begin by providing a convergence rate for AS-LSVRG (Algorithm~\ref{alg:lsrvg-as}) under strong convexity. Let
\begin{equation}\label{eq:D-def}
\mathcal{D}^t \coloneqq \frac{1}{n} \sum^n_{i=1} \frac{1}{L_i} \left\Vert \nabla f_i (w^t) - \nabla f_i (x^{\star}) \right\Vert^2.
\end{equation}
Roughly speaking, $\mathcal{D}^t$ measures the weighted distance between control-variates $w^t$ and the minimizer $x^{\star}$, where the weights are the inverse of Lipschitz constants.

\begin{theorem}
\label{thm:as-lsvrg-strong-cvx}
Suppose Assumptions~\ref{assump:cvx}-\ref{assump:strong-cvx} hold. Let $\eta_t \equiv \eta$ for all $t$, where $\eta \leq 1/ (6 \bar{L} + L_F)$, and let
\begin{equation*}
\alpha_1 \coloneqq \max \left\{ 1 -\eta \mu, 1 - \frac{\rho}{2}  \right\}.
\end{equation*}
Then
\begin{equation*}
\mathbb{E} \left[ \left\Vert x^{T} - x^{\star}  \right\Vert^2 + \frac{4\eta^2 \bar{L}}{\rho} \mathcal{D}^{T} \right] \leq \, \alpha^T_1 \mathbb{E} \left[ \left\Vert x^0 - x^{\star}  \right\Vert^2 + \frac{4\eta^2 \bar{L}}{\rho} \mathcal{D}^0 \right] + \eta^2 \sum^T_{t=0} \alpha^{T-t}_1 \mathbb{E} \left[  V^t_e \left( \mathbf{p}^t \right) - V^t_e \left( \mathbf{p}^{IS} \right)  \right].
\end{equation*}
\end{theorem}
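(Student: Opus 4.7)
}

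The plan is to build a Lyapunov function of the form
$\Phi^t \coloneqq \|x^t-x^\star\|^2 + \tfrac{4\eta^2\bar L}{\rho}\mathcal{D}^t$
and derive a per-iterate contraction $\mathbb{E}[\Phi^{t+1}\mid \mathcal{F}^t] \leq \alpha_1 \Phi^t + \eta^2\bigl[V^t_e(\mathbf{p}^t) - V^t_e(\mathbf{p}^{IS})\bigr]$, after which the theorem follows by unrolling the recursion and taking total expectation. The coefficient $4\eta^2\bar L/\rho$ in front of $\mathcal{D}^t$ is not an aesthetic choice: it will be forced upon us by matching the two geometric rates $1-\eta\mu$ and $1-\rho/2$ appearing in $\alpha_1$.

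First I would handle the $\|x^{t+1}-x^\star\|^2$ term. Using $x^{t+1}=x^t-\eta g^t$, the identity $\mathbb{E}[g^t\mid\mathcal{F}^t]=\nabla F(x^t)$, and the variance identity $\mathbb{E}[\|g^t-\nabla F(x^t)\|^2\mid\mathcal{F}^t]=V^t(\mathbf{p}^t) = V^t_e(\mathbf{p}^t)-\|\nabla F(x^t)-\nabla F(w^t)\|^2$, I would expand the square to obtain
\begin{equation*}
\mathbb{E}\bigl[\|x^{t+1}-x^\star\|^2\mid\mathcal{F}^t\bigr] = \|x^t-x^\star\|^2 - 2\eta\langle\nabla F(x^t),x^t-x^\star\rangle + \eta^2\|\nabla F(x^t)\|^2 + \eta^2 V^t(\mathbf{p}^t).
\end{equation*}
Strong convexity of $F$ gives $-2\eta\langle\nabla F(x^t),x^t-x^\star\rangle \leq -\eta\mu\|x^t-x^\star\|^2 - 2\eta(F(x^t)-F(x^\star))$, and $L_F$-smoothness of $F$ gives $\|\nabla F(x^t)\|^2\leq 2L_F(F(x^t)-F(x^\star))$.

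The decisive step is the decomposition $V^t_e(\mathbf{p}^t) = \bigl[V^t_e(\mathbf{p}^t)-V^t_e(\mathbf{p}^{IS})\bigr] + V^t_e(\mathbf{p}^{IS})$. The first bracket is what will appear in the final bound; the second admits the closed form $V^t_e(\mathbf{p}^{IS}) = \tfrac{\bar L}{n}\sum_i L_i^{-1}\|\nabla f_i(x^t)-\nabla f_i(w^t)\|^2$. Applying the standard co-coercivity inequality $L_i^{-1}\|\nabla f_i(x)-\nabla f_i(x^\star)\|^2\leq 2\bigl(f_i(x)-f_i(x^\star)-\langle\nabla f_i(x^\star),x-x^\star\rangle\bigr)$ together with $\|a-b\|^2\leq 2\|a\|^2+2\|b\|^2$ (for $a=\nabla f_i(x^t)-\nabla f_i(x^\star)$, $b=\nabla f_i(w^t)-\nabla f_i(x^\star)$), I can bound $V^t_e(\mathbf{p}^{IS}) \leq 4\bar L (F(x^t)-F(x^\star)) + 2\bar L\mathcal{D}^t$. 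Dropping the non-positive $-\eta^2\|\nabla F(x^t)-\nabla F(w^t)\|^2$ contribution yields
\begin{equation*}
\mathbb{E}\bigl[\|x^{t+1}-x^\star\|^2\mid\mathcal{F}^t\bigr] \leq (1-\eta\mu)\|x^t-x^\star\|^2 + \bigl[-2\eta+2\eta^2 L_F+4\eta^2\bar L\bigr](F(x^t)-F(x^\star)) + 2\eta^2\bar L\,\mathcal{D}^t + \eta^2[V^t_e(\mathbf{p}^t)-V^t_e(\mathbf{p}^{IS})].
\end{equation*}

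Next, from the coin-flip update for $w^{t+1}$ we get the recursion $\mathbb{E}[\mathcal{D}^{t+1}\mid\mathcal{F}^t] = (1-\rho)\mathcal{D}^t + \tfrac{\rho}{n}\sum_i L_i^{-1}\|\nabla f_i(x^t)-\nabla f_i(x^\star)\|^2 \leq (1-\rho)\mathcal{D}^t + 2\rho(F(x^t)-F(x^\star))$, again by co-coercivity. Multiplying by $4\eta^2\bar L/\rho$ and adding to the previous bound produces a coefficient of $F(x^t)-F(x^\star)$ equal to $-2\eta+2\eta^2 L_F+12\eta^2\bar L$, which is non-positive precisely when $\eta\leq 1/(6\bar L+L_F)$---this is exactly the hypothesis. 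The coefficient of $\mathcal{D}^t$ becomes $2\eta^2\bar L + \tfrac{4\eta^2\bar L(1-\rho)}{\rho} = (1-\tfrac{\rho}{2})\cdot\tfrac{4\eta^2\bar L}{\rho}$, which matches $\Phi^t$ with contraction factor $1-\rho/2$. Combined with the factor $1-\eta\mu$ on $\|x^t-x^\star\|^2$, we obtain $\mathbb{E}[\Phi^{t+1}\mid\mathcal{F}^t]\leq \alpha_1\Phi^t + \eta^2[V^t_e(\mathbf{p}^t)-V^t_e(\mathbf{p}^{IS})]$. Unrolling and taking total expectation gives the stated bound. The main obstacle is the precise coefficient bookkeeping: the constant $4\eta^2\bar L/\rho$ and the stepsize threshold $1/(6\bar L+L_F)$ are sharp and emerge only after balancing the $\mathcal{D}^t$ recursion against the $F(x^t)-F(x^\star)$ drift, so extra slack anywhere would spoil the clean form of $\alpha_1$.
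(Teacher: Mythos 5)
Your proposal is correct and follows essentially the same route as the paper's proof: the same expansion of $\mathbb{E}_t\|x^{t+1}-x^\star\|^2$, the same decomposition $V^t_e(\mathbf{p}^t)=V^t_e(\mathbf{p}^{IS})+[V^t_e(\mathbf{p}^t)-V^t_e(\mathbf{p}^{IS})]$ with the co-coercivity bound $V^t_e(\mathbf{p}^{IS})\leq 4\bar L(F(x^t)-F(x^\star))+2\bar L\mathcal{D}^t$, the same recursion for $\mathcal{D}^t$, and the same Lyapunov weight $4\eta^2\bar L/\rho$. The coefficient bookkeeping you describe matches the paper exactly.
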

See proof in Appendix~\ref{sec:proof-as-lsvrg-strong-cvx}.
From the convergence rate in Theorem~\ref{thm:as-lsvrg-strong-cvx}, we observe that a good sampling distribution sequence should minimize the cumulative sampling variance $\sum^T_{t=0} \alpha^{T-t}_1 \mathbb{E} \left[V^t_e \left( \mathbf{p}^t \right) \right]$. This justifies the usage of AdaOSMD to design a sequence of sampling distributions, as its purpose is to minimize the cumulative sampling variance~\citep{Zhao2021Adaptivea}. When
\begin{equation}
\label{eq:requir-var}
\sum^T_{t=0} \alpha^{T-t}_1 \mathbb{E} \left[  V^t_e \left( \mathbf{p}^t \right) - V^t_e \left( \mathbf{p}^{IS} \right)  \right] = O \left( \alpha^T \right),
\end{equation}
the iteration complexity to achieve $\epsilon$-accuracy is $O(1/(\log(1/\alpha_1))\log(1/\epsilon))$. When $\rho=1/n$, $\eta=1/ (6 \bar{L} + L_F)$, and both $\bar{L}/\mu$ and $n$ are large, this bound is $O((n + \bar{L}/\mu)\log(1/\epsilon))$, which recovers the complexity of L-SVRG when sampling from $\mathbf{p}^{IS}$ \citep{Qian2021L}.

When \eqref{eq:requir-var} holds, we can further compare the iteration complexity of AS-LSVRG with the iteration complexity of SGD with importance sampling from $\mathbf{p}^{IS}$, which is $O((\sigma^2_{\star}/(\mu^2\epsilon)+\bar{L}/\mu)\log(1/\epsilon))$, where $\sigma^2_{\star}$ is defined in~\eqref{eq:opt-hetero} \citep{Needell2016Stochastic}, and the iteration complexity of L-SVRG, which is $O((n + L_{\max}/\mu)\log(1/\epsilon))$ \citep{Kovalev2020Dont}.
First, we observe that the iteration complexities of AS-LSVRG and L-SVRG do not depend on $\sigma^2_{\star}$, while the iteration complexity of SGD does.
This shows that the control-variate improves upon optimization heterogeneity.
Second, we observe that both iteration complexities of AS-LSVRG and SGD depend on $\bar{L}$, while the iteration complexity of L-SVRG depends on $L_{\max}$.
This shows that adaptive sampling improves upon smoothness heterogeneity.
Based on these two observations, we have the following important takeaway:

\textit{While both the control-variate and adaptive sampling are reducing the variance of stochastic gradient, the control-variate is improving upon optimization heterogeneity, and adaptive sampling is improving upon smoothness heterogeneity.}

Another important observation is that when  $\mathbf{p}^t=\mathbf{p}^t_{\star}$, we have $V^t_e \left( \mathbf{p}^t_{\star} \right) \leq V^t_e \left( \mathbf{p}^{IS} \right)$.
Therefore, the performance of the oracle optimal dynamic sampling distribution is at least as good as the fixed sampling distribution $\mathbf{p}^{IS}$. 
The gains from using a dynamic sampling distribution can be significant, as we show in experiments in Section~\ref{sec:sync-data-exp} and Section~\ref{sec:real-data-exp}.
While the closed form of $\mathbf{p}^t_{\star}$ in~\eqref{eq:opt-dyn-samp} requires knowledge of $\nabla f_i (x^t) - \nabla f_i (w^t)$, which is not available in practice, we can minimize the cumulative sampling variance $\sum^T_{t=1}V^t_e \left( \mathbf{p}^t \right)$ sequentially using AdaOSMD, which results in the approximation $\mathbf{p}^t$, without the need for prior information.
We discuss in Section~\ref{sec:comp-is-as} below when this adaptive sampling strategy can perform better than $\mathbf{p}^{IS}$.

The following result provides the convergence rate when $F(x)$ is weakly convex.
\begin{theorem}
\label{thm:as-lsvrg-weak-cvx}
Suppose Assumptions~\ref{assump:cvx} and~\ref{assump:smooth} hold. Let $\eta_t \equiv \eta$ for all $t$, where $\eta \leq 1/ (6 L_F)$, and let $\hat{x}^T = (1/T) \sum^{T}_{t=1} x^t $. Then
\begin{multline*}
\mathbb{E} \left[ F (\hat{x}^T) -  F(x^\star) \right] \leq  \frac{4}{T} \left( F (x^0) - F(x^{\star}) \right) \\
+ \frac{5}{T} \left\{ \frac{1}{2\eta}  \left\Vert x^0 - x^{\star} \right\Vert^2 +  \frac{12 \eta \bar{L} (1-\rho)}{5\rho}  \left( F(w^0) - F(x^{\star}) \right) \right\} 
 + \frac{3 \eta}{T} \sum^{T}_{t=0} \mathbb{E} \left[ V^t_e \left( \mathbf{p}^t \right) - V^t_e \left( \mathbf{p}^{IS} \right) \right].
\end{multline*}
\end{theorem}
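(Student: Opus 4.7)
The plan is to build a Lyapunov function that combines the distance $\|x^t-x^{\star}\|^2$, the function gap $F(x^t)-F(x^{\star})$, and the control-variate gap $F(w^t)-F(x^{\star})$, show that its expected one-step decrease dominates $F(x^t)-F(x^{\star})$ up to the noise $V_e^t(\mathbf{p}^t)-V_e^t(\mathbf{p}^{IS})$, telescope, and apply Jensen's inequality. Matching the coefficients of the three initial-condition terms in the statement (namely $4/T$, $5/(2\eta T)$, and $12\eta\bar{L}(1-\rho)/(\rho T)$) suggests the target
\[
\Phi^t = \|x^t-x^{\star}\|^2 + \tfrac{8\eta}{5}\bigl(F(x^t)-F(x^{\star})\bigr) + \tfrac{24\eta^2\bar{L}(1-\rho)}{5\rho}\bigl(F(w^t)-F(x^{\star})\bigr).
\]

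First I would derive three one-step inequalities. Expanding $\|x^{t+1}-x^{\star}\|^2$, using that $g^t$ is an unbiased estimator of $\nabla F(x^t)$, and invoking convexity gives $\mathbb{E}\|x^{t+1}-x^{\star}\|^2\le \|x^t-x^{\star}\|^2-2\eta(F(x^t)-F(x^{\star}))+\eta^2\mathbb{E}\|g^t\|^2$. The $L_F$-smoothness of $F$ yields a function-value descent $\mathbb{E}(F(x^{t+1})-F(x^{\star}))\le F(x^t)-F(x^{\star})-\eta\|\nabla F(x^t)\|^2+(L_F\eta^2/2)\mathbb{E}\|g^t\|^2$, and the biased coin flip gives $\mathbb{E}(F(w^{t+1})-F(x^{\star}))=\rho(F(x^t)-F(x^{\star}))+(1-\rho)(F(w^t)-F(x^{\star}))$. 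These three inequalities will be combined with the weights $1$, $8\eta/5$, and $24\eta^2\bar{L}(1-\rho)/(5\rho)$ respectively.

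Next I would bound $\mathbb{E}\|g^t\|^2$ and compare it to the $\mathbf{p}^{IS}$ baseline. The variance identity yields $\mathbb{E}\|g^t\|^2=\|\nabla F(x^t)\|^2+V^t(\mathbf{p}^t)\le\|\nabla F(x^t)\|^2+V_e^t(\mathbf{p}^t)$; the $\|\nabla F(x^t)\|^2$ piece is precisely what the $-\eta\|\nabla F(x^t)\|^2$ from the function-value descent is designed to cancel once the Lyapunov weight $8\eta/5$ on $F(x^t)-F(x^{\star})$ is in place. Writing $V_e^t(\mathbf{p}^t)=V_e^t(\mathbf{p}^{IS})+[V_e^t(\mathbf{p}^t)-V_e^t(\mathbf{p}^{IS})]$ singles out the noise term that appears in the theorem. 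For $V_e^t(\mathbf{p}^{IS})$ I would apply per-component co-coercivity $\|\nabla f_i(x)-\nabla f_i(x^{\star})\|^2\le 2L_i\bigl(f_i(x)-f_i(x^{\star})-\langle\nabla f_i(x^{\star}),x-x^{\star}\rangle\bigr)$, note that $L_i$ cancels against the $1/L_i$ left by $p^{IS}_i=L_i/(n\bar{L})$, and split $\|\nabla f_i(x^t)-\nabla f_i(w^t)\|^2$ around $x^{\star}$ via $\|u-v\|^2\le 2\|u\|^2+2\|v\|^2$, which produces $V_e^t(\mathbf{p}^{IS})\le 4\bar{L}(F(x^t)-F(x^{\star}))+4\bar{L}(F(w^t)-F(x^{\star}))$.

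The final step is to substitute these bounds into $\mathbb{E}\Phi^{t+1}$ and check that, under $\eta\le 1/(6L_F)$, (a) the coefficient of $\|\nabla F(x^t)\|^2$ is non-positive and can be dropped, (b) the coefficient of $F(w^t)-F(x^{\star})$ is non-positive, with the $(1-\rho)$ factor in the Lyapunov weight being exactly what lets the $\rho$-weighted absorption via the coin flip cancel the $4\bar{L}(F(w^t)-F(x^{\star}))$ piece from the $\mathbf{p}^{IS}$ bound, and (c) the coefficient of $F(x^t)-F(x^{\star})$ is at most $-2\eta/5$. This reduces the recursion to $\mathbb{E}[\Phi^{t+1}]\le \Phi^t-(2\eta/5)(F(x^t)-F(x^{\star}))+(6\eta^2/5)\,[V_e^t(\mathbf{p}^t)-V_e^t(\mathbf{p}^{IS})]$. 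Telescoping over $t=0,\dots,T-1$, using $\Phi^T\ge 0$, dividing by $2\eta T/5$, and applying Jensen's inequality $F(\hat{x}^T)-F(x^{\star})\le (1/T)\sum_{t=1}^T(F(x^t)-F(x^{\star}))$ then yields the claimed bound after identifying $\Phi^0$ with the three initial terms. The hard part will be item (b): verifying that a single Lyapunov choice produces the simultaneous cancellation for every $\rho\in(0,1]$ under only the $L_F$-based step-size restriction, which is what absorbs the $\bar{L}$-dependence of the $\mathbf{p}^{IS}$ variance bound into the initial-condition coefficient on $F(w^0)-F(x^{\star})$ rather than into a tighter step-size constraint.
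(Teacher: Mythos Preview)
Your route is genuinely different from the paper's, and the difference is exactly where item~(b) breaks. The paper does \emph{not} combine a distance recursion with a separate descent lemma. It starts from convexity $F(x^\star)\ge F(x^t)+\langle\nabla F(x^t),x^\star-x^t\rangle$, replaces $\nabla F(x^t)$ by $g^t$, inserts $x^{t+1}$, uses $L_F$-smoothness to turn $F(x^t)+\langle\nabla F(x^t),x^{t+1}-x^t\rangle$ into $F(x^{t+1})$, the three-point identity to turn $\langle g^t,x^\star-x^{t+1}\rangle$ into $\frac{1}{2\eta}\bigl(\|x^t-x^\star\|^2-\|x^{t+1}-x^\star\|^2+\|x^{t+1}-x^t\|^2\bigr)$, and Young's inequality with a \emph{free} parameter $\beta$ on $\langle g^t-\nabla F(x^t),x^{t+1}-x^t\rangle$. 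Thus the variance $\|g^t-\nabla F(x^t)\|^2$ (not $\|g^t\|^2$) enters at a single place with tunable coefficient $\beta/2$, $\|\nabla F(x^t)\|^2$ never appears, and the control-variate term in the Lyapunov is $\mathcal{D}^t=\frac{1}{n}\sum_i L_i^{-1}\|\nabla f_i(w^t)-\nabla f_i(x^\star)\|^2$ rather than $F(w^t)-F(x^\star)$. With Lyapunov weight $\alpha=\beta\bar{L}/\rho$ on $\mathcal{D}^t$, the coefficients of $\mathcal{D}^t$ and $\mathbb{E}_t[\mathcal{D}^{t+1}]$ match \emph{exactly}, so there is nothing to verify in your item~(b). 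The $(1-\rho)$ you see in the final constant comes out only at the very end via $\mathcal{D}^0\le 2(F(w^0)-F(x^\star))$.

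In your scheme the variance enters with fixed coefficient $\eta^2+\tfrac{8\eta}{5}\cdot\tfrac{L_F\eta^2}{2}$, and with your Lyapunov weight $c_w=\tfrac{24\eta^2\bar{L}(1-\rho)}{5\rho}$ the net coefficient of $F(w^t)-F(x^\star)$ is
\[
-\,c_w\rho\;+\;4\bar{L}\Bigl(\eta^2+\tfrac{4L_F\eta^3}{5}\Bigr)\;=\;\frac{4\bar{L}\eta^2}{5}\,\bigl(6\rho+4L_F\eta-1\bigr),
\]
which is \emph{positive} for every $\rho>1/6$ no matter how small $\eta$ is. The $(1-\rho)$ factor you reverse-engineered into $c_w$ from the theorem's constants is precisely what kills you: it makes the absorption budget $c_w\rho=\tfrac{24\eta^2\bar{L}(1-\rho)}{5}$ vanish as $\rho\to1$, while the $4\bar{L}\eta^2$ you need to absorb does not. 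So item~(b) fails as stated, and no single Lyapunov of your form can yield the claimed bound uniformly over $\rho\in(0,1]$. The paper sidesteps this entirely by decoupling the variance weight $\beta$ from the step size $\eta$ and by matching the control-variate coefficients exactly via $\alpha=\beta\bar{L}/\rho$.
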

See proof in Appendix~\ref{sec:proof-as-lsvrg-weak-cvx}.
In the weakly convex case, the cumulative sampling variance is defined as $\sum^{T}_{t=0} \mathbb{E} \left[ V^t_e \left( \mathbf{p}^t \right) \right]$, and a good sampling distribution sequence should minimize it. When $\eta=1/ (6 L_F)$, $\rho=1/n$, and $\sum^{T}_{t=0} \mathbb{E} \left[ V^t_e \left( \mathbf{p}^t \right) - V^t_e \left( \mathbf{p}^{IS} \right) \right]=O(T(L_F + n))$, the iteration complexity to reach $\epsilon$-accuracy is $O((L_F + n)(1/\epsilon))$, which recovers the rate of L-SVRG  when sampling from $\mathbf{p}^{IS}$ \cite{Qian2021L}.

\subsection{Convergence analysis of AS-LKatyusha}

We prove a convergence rate for AS-LKatyusha (Algorithm~\ref{alg:lkatyusha-as}) under strong convexity. Let
\begin{equation}
\label{eq:Lyapunov-compo}
\begin{aligned}
&\mathcal{Z}^t \coloneqq \frac{L(1+\eta_t \kappa)}{2 \eta_t} \left\Vert z^t - x^{\star} \right\Vert^2, \\
&\mathcal{V}^t \coloneqq \frac{1}{\theta_1} \left( F (v^t) - F(x^{\star}) \right), \\
&\mathcal{W}^t \coloneqq \frac{\theta_2 (1+\theta_1)}{\rho \theta_1} \left( F (w^t) - F(x^{\star}) \right),
\end{aligned}
\end{equation}
and $\Psi^t \coloneqq \mathcal{Z}^t + \mathcal{V}^t + \mathcal{W}^t$. We then have the following theorem. See proof in Appendix~\ref{sec:proof-thm-as-lkatyusha-strong-cvx}.
\begin{theorem}
\label{thm:as-lkatyusha-strong-cvx}
Suppose Assumptions~\ref{assump:cvx}-\ref{assump:strong-cvx} hold. Let $\eta_t \equiv \eta$ for all $t$, where $\eta =((1+\theta_2)\theta_1)^{-1}\theta_2$, and $\kappa = \mu / L$ with $L=\bar{L}$. Let $\theta_2=1/2$, $\theta_1 \leq 1/2$, and
\begin{equation*}
\alpha_2 \coloneqq \max \left\{  \frac{1}{1+\eta \kappa} , 1 - \frac{\theta_1}{2}, 1- \frac{\rho \theta_1}{1+\theta_1}  \right\}.
\end{equation*}
Then
\begin{equation*}
\mathbb{E} \left[ \Psi^T  \right] \leq \alpha^T_2 \Psi^0 + \frac{1}{4\bar{L}\theta_1} \sum^{T-1}_{t=0} \alpha^{T-t-1}_2 \mathbb{E}  \left[ V^t_e \left( \mathbf{p}^t \right) - V^t_e \left( \mathbf{p}^{IS} \right) \right].
\end{equation*}
\end{theorem}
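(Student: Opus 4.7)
The plan is to establish a one-step conditional descent inequality
$$\mathbb{E}\bigl[\Psi^{t+1} \mid \mathcal{F}_t\bigr] \leq \alpha_2\,\Psi^t + \frac{1}{4\bar{L}\theta_1}\bigl(V_e^t(\mathbf{p}^t) - V_e^t(\mathbf{p}^{IS})\bigr),$$
and then iterate and take total expectation; the geometric summation with ratio $\alpha_2$ produces the bound in the theorem.

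To establish the per-step inequality, I would bound each component of $\Psi^t$ separately. For $\mathcal{Z}^{t+1}$, the three-point identity applied to the $\eta\kappa$-strongly convex prox step that defines $z^{t+1}$ gives
$$\tfrac{L(1+\eta\kappa)}{2\eta}\|z^{t+1}-x^\star\|^2 \leq \tfrac{L}{2\eta}\|z^t-x^\star\|^2 + \tfrac{L\kappa}{2}\|x^t-x^\star\|^2 - \tfrac{L}{2\eta}\|z^{t+1}-z^t\|^2 - \langle g^t, z^{t+1}-x^\star\rangle,$$
after which $\mu$-strong convexity of $F$ converts $\|x^t-x^\star\|^2$ into a $F(x^t)-F(x^\star)$ term, and unbiasedness $\mathbb{E}_t[g^t]=\nabla F(x^t)$ (straightforward from the definition of $g^t$) replaces the stochastic inner product in expectation. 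For $\mathcal{V}^{t+1}$, I apply $\bar{L}$-smoothness of $F$ along $v^{t+1}-x^t = \theta_1(z^{t+1}-z^t)$ to get $F(v^{t+1}) \leq F(x^t) + \theta_1\langle\nabla F(x^t), z^{t+1}-z^t\rangle + \tfrac{\bar{L}\theta_1^2}{2}\|z^{t+1}-z^t\|^2$. For $\mathcal{W}^{t+1}$, the coin flip gives $\mathbb{E}_t[F(w^{t+1})] = \rho F(v^t) + (1-\rho)F(w^t)$ directly.

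The stochastic-gradient variance is the central ingredient. Unbiasedness yields $\mathbb{E}_t\|g^t - \nabla F(x^t)\|^2 = V_e^t(\mathbf{p}^t) - \|\nabla F(x^t) - \nabla F(w^t)\|^2$, which I split into the excess $V_e^t(\mathbf{p}^t) - V_e^t(\mathbf{p}^{IS})$ (which becomes the right-hand-side noise term) plus $V_e^t(\mathbf{p}^{IS})$ plus a benign non-positive remainder. To absorb the $V_e^t(\mathbf{p}^{IS})$ piece into the Lyapunov contraction, I use the standard smoothness-plus-convexity bound $\|\nabla f_i(x^t) - \nabla f_i(w^t)\|^2 \leq 2L_i\bigl(f_i(x^t) - f_i(w^t) - \langle\nabla f_i(w^t), x^t - w^t\rangle\bigr)$, weighted by $p_i^{IS} = L_i/(n\bar{L})$ and summed, which gives
$$V_e^t(\mathbf{p}^{IS}) \leq 2\bar{L}\bigl(F(x^t) - F(w^t) - \langle \nabla F(w^t), x^t - w^t\rangle\bigr).$$
The right-hand side is a Bregman-divergence quantity, which I convert into $F(x^t), F(w^t), F(v^t)$ contributions via the identity $x^t = \theta_1 z^t + \theta_2 w^t + (1-\theta_1-\theta_2)v^t$ together with convexity of $F$; these repackage as contracting pieces of $\mathcal{V}^t$ and $\mathcal{W}^t$.

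Combining the three component bounds and plugging in $\eta = \theta_2/((1+\theta_2)\theta_1)$, $\theta_2 = 1/2$, and $L = \bar{L}$ is then an algebraic matter: the specific choice of $\eta$ makes the coefficient of $\|z^{t+1}-z^t\|^2$ non-positive (so the term can be dropped), and the cross term $\theta_1\langle\nabla F(x^t), z^{t+1}-z^t\rangle$ from the $\mathcal{V}^{t+1}$ bound cancels against the $\langle g^t, z^{t+1}-z^t\rangle$ piece extracted from $\langle g^t, z^{t+1}-x^\star\rangle$ after conditional expectation. The main obstacle will be the meticulous coefficient bookkeeping in this final combining step: the residual $F(x^t), F(v^t), F(w^t), F(x^\star)$ terms must be balanced so that, simultaneously, $\mathcal{Z}^t$ contracts at rate $1/(1+\eta\kappa)$, $\mathcal{V}^t$ at rate $1-\theta_1/2$, and $\mathcal{W}^t$ at rate $1-\rho\theta_1/(1+\theta_1)$, while the Bregman-divergence remainder from the $V_e^t(\mathbf{p}^{IS})$ absorption is simultaneously dominated. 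Making all three contraction rates emerge from a single combined inequality, rather than sacrificing one to close another, is the delicate signature-Katyusha heart of the proof.
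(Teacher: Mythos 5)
Your overall route is the paper's: a one-step contraction for $\Psi^t$ built from the prox three-point identity for $\mathcal{Z}$, smoothness of $F$ along $v^{t+1}-x^t=\theta_1(z^{t+1}-z^t)$ for $\mathcal{V}$, the coin flip for $\mathcal{W}$, and the decomposition of $\mathbb{E}_t\left[\Vert g^t-\nabla F(x^t)\Vert^2\right]$ into $V^t_e(\mathbf{p}^{IS})$ (absorbed via cocoercivity into a Bregman divergence) plus the excess $V^t_e(\mathbf{p}^t)-V^t_e(\mathbf{p}^{IS})$. However, one step as you describe it would fail: the cross term $\langle g^t, z^{t+1}-z^t\rangle$ does \emph{not} cancel against $\theta_1\langle\nabla F(x^t),z^{t+1}-z^t\rangle$ after conditional expectation, because $z^{t+1}$ is a function of $i_t$ and hence correlated with $g^t$, so $\mathbb{E}_t\left[\langle g^t-\nabla F(x^t), z^{t+1}-z^t\rangle\right]\neq 0$. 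Likewise, the leftover quadratic $\frac{\bar L}{2}\left(\frac{1}{\eta}-\theta_1\right)\Vert z^{t+1}-z^t\Vert^2$ should not simply be dropped: it is exactly the budget needed to apply Young's inequality to the residual $\langle g^t-\nabla F(x^t), v^{t+1}-x^t\rangle$, and that Young step is the \emph{only} place where $\Vert g^t-\nabla F(x^t)\Vert^2$ --- and hence the coefficient $\frac{\eta}{2\bar L(1-\eta\theta_1)}=\frac{\theta_2}{2\bar L\theta_1}=\frac{1}{4\bar L\theta_1}$ appearing in the theorem --- enters the recursion (this is Lemma~\ref{lemma:rela-Zt-g}). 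As written, your argument computes the variance but leaves it with nowhere to attach.

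A second, smaller point: to absorb $V^t_e(\mathbf{p}^{IS})$ you need the Bregman divergence anchored at $x^t$, i.e.\ $V^t_e(\mathbf{p}^{IS})\le 2\bar L\left(F(w^t)-F(x^t)-\langle\nabla F(x^t),w^t-x^t\rangle\right)$, not the orientation with $\nabla f_i(w^t)$ that you wrote. Only the $x^t$-anchored version produces the inner product $\langle\nabla F(x^t),x^t-w^t\rangle$ that cancels the $\frac{\theta_2}{\theta_1}\langle\nabla F(x^t),x^t-w^t\rangle$ term coming from the decomposition $z^t-x^t=\frac{\theta_2}{\theta_1}(x^t-w^t)+\frac{1-\theta_1-\theta_2}{\theta_1}(x^t-v^t)$, leaving the clean $-\frac{\theta_2}{\theta_1}\left(F(w^t)-F(x^t)\right)$ that repackages into the $\mathcal{W}^t$ contraction; with your orientation an uncontrolled $\langle\nabla F(w^t),x^t-w^t\rangle$ term survives, and convexity bounds it in the wrong direction. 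Both issues are local and repairable, but they sit precisely at the ``delicate signature-Katyusha heart'' you identify, so the proposal as stated does not yet close.
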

The cumulative sampling variance is defined as $\sum^{T-1}_{t=0} \alpha^{T-t-1}_2 \mathbb{E}  \left[ V^t_e \left( \mathbf{p}^t \right) \right]$, and can be used as the minimization objective to design a sequence of sampling distributions. When $\rho=1/n$, $\theta_1=\min \{ \sqrt{2 \kappa  n / 3}, 1/2 \}$, and $\sum^{T-1}_{t=0} \alpha^{T-t-1}_2 \mathbb{E}  \left[ V^t_e \left( \mathbf{p}^t \right) - V^t_e \left( \mathbf{p}^{IS} \right) \right] = O ( \alpha^T_2)$, then the iteration complexity to reach $\epsilon$-accuracy is $O((n + \sqrt{n \bar{L} / \mu})\log(1/\epsilon))$, which recovers the rate of L-Katyusha when sampling from $\mathbf{p}^{IS}$ \cite{Qian2021L}. Additionally, when compared with the rate of L-Katyusha \cite{Kovalev2020Dont}, we see that the dependency on $L_{\max}$ is improved to $\bar{L}$, which is consistent with our conclusion in Section~\ref{sec:convg-as-lsvrg} that adaptive sampling is responsible for improving smoothness heterogeneity.

\subsection{Benefits of adaptive sampling}
\label{sec:comp-is-as}

We analyze when adaptive sampling will improve over sampling from $\mathbf{p}^{IS}$. We first emphasize that sampling from $\mathbf{p}^{IS}$ requires knowledge of Lipschitz constants $\{L_i\}_{i\in[n]}$, which, in general, are expensive to compute. On the other hand, the additional computational cost of adaptive sampling is usually comparable to the cost  of computing a stochastic gradient.

In addition to computational benefits, there are certain settings where adaptive sampling may result in improved convergence, despite not using prior information. A key quantity to understand is 
\begin{equation*}
\Delta V \left( \mathbf{p}^{1:T} \right) \coloneqq  \sum^{T}_{t=0} \alpha^T \mathbb{E} \left[ V^t_e \left( \mathbf{p}^{IS} \right) - V^t_e \left( \mathbf{p}^t \right) \right], 
\end{equation*}
where $\alpha \in \{ \alpha_1,\alpha_2,1 \}$, depending on the algorithm used and the assumptions made. 
The larger $\Delta V \left( \mathbf{p}^{1:T} \right)$ is, the more beneficial adaptive sampling is. In the following, we discuss when $\Delta V (\mathbf{p}^{1:T}_{\star})$ is large. 
Although $\mathbf{p}^{1:T}_{\star}$ is not available in practice, $\Delta V (\mathbf{p}^{1:T}_{\star})$ can be used to understand when adaptive sampling methods that approximate $\mathbf{p}^t_{\star}$ will be superior to using $\mathbf{p}^{IS}$ for importance sampling.

In many machine learning applications, $f_i(x)$ has the form $f_i(x) =  l(x, \xi_i)$, where $\xi_i$ is the $i$-th data point. Let $x_i^{\star} \in \mathbb{R}^d$ be such that $\nabla l (x_i^{\star}, \xi_i)=0$. Then $\Vert \nabla f_i (x) \Vert = \Vert \nabla l (x, \xi_i) - \nabla l (x_i^{\star}, \xi_i) \Vert$. This way, we see that the variability of norms of gradients of different data points has two sources: the first source is the difference between $\xi_i$'s, the second source is the difference between $x_i^{\star}$'s. We name the first source as the \emph{context-shift} and the second source as the \emph{concept-shift}.

When $f_i(x)$ is twice continuously differentiable, we have 
\[
L_i = \sup_{x \in \mathbb{R}^d} \lambda_{\max} \left( \nabla^2 f_i(x) \right) = \sup_{x \in \mathbb{R}^d} \lambda_{\max} \left( \nabla^2 l_i(x,\xi_i) \right).
\]
Thus, when we use $\mathbf{p}^{IS}$ to sample, we ignore the concept-shift and only leverage the context-shift with the sampling distribution. As a result, $\mathbf{p}^{IS}$ is most useful when the context-shift dominates. On the other hand, adaptive sampling takes both the concept-shift and context-shift into consideration. When the major source of gradient norm differences is the concept-shift, adaptive sampling can perform better than sampling from $\mathbf{p}^{IS}$. This is illustrated in Section~\ref{sec:is-vs-as-exp}.

\section{Synthetic data experiment}
\label{sec:sync-data-exp}
We use synthetic data to illustrate our theory and compare several different stochastic optimization algorithms. We denote L-SVRG + uniform sampling as L-SVRG, L-SVRG + oracle optimal sampling as Optimal-LSVRG, and L-SVRG + sampling from $\mathbf{p}^{IS}$ as IS-LSVRG. Similarly, we define SGD, Optimal-SGD, IS-SGD, L-Katyusha, Optimal-LKatyusha, and IS-LKatyusha. Additionally, AS-LSVRG and AS-LKatyusha refer to Algorithm~\ref{alg:lsrvg-as} and Algorithm~\ref{alg:lkatyusha-as} with the AdaOSMD sampler (Algorithm~\ref{alg:adap-OMSD-sampler-meta}), respectively, except in Section~\ref{sec:exp-nonconvex}, where we use the OSMD Sampler (Algorithm~\ref{alg:OSMD-sampler}).

We set $\rho=1/n$ for all algorithms. The algorithm parameters for L-Katyusha with all sampling strategies are set according to Theorem~\ref{thm:as-lkatyusha-strong-cvx}, where $L=\bar{L}$ for Optimal-LKatyusha and IS-LKatyusha, and $L=L_{\max}$ for L-SVRG. For AS-LKatyusha, we set $L=0.4L_{\max}+0.6\bar{L}$. As for the parameters of AdaOSMD, they are configured as stated in Section~\ref{sec:lsvrg-lkatyusha-as}; when choosing a mini-batch of samples in each iteration, we set them according to~\cite{Zhao2021Adaptivea}.

\textbf{Data generation:} We generate data from a linear regression model: $b_i = \langle \theta^{\star}, a_i \rangle + \zeta_i$, where $a_i \iidsim N(0,s_i \cdot \Sigma)$ with $\Sigma=\text{diag}(25^{\frac{0}{d-1}-1},\cdots,25^{\frac{d-1}{d-1}-1} )$ and $s_i \iidsim e^{N(0,\nu^2)}$, $\zeta_i \iidsim N(0, \sigma^2)$,
and the entries of $\theta^{\star}$ are generated i.i.d.~from $N(10.0, 3.0^2)$. We let $f_i(x) \coloneqq  l(x ; a_i, b_i) $, where $l(x ; a_i, b_i) \coloneqq (1/2) (b_i - \langle x, a_i \rangle)^2 $ is the square error loss. In this setting, the variance $\sigma^2$ controls the optimization heterogeneity in~\eqref{eq:opt-hetero}, with larger $\sigma^2$ corresponding to larger optimization heterogeneity, while $\nu$ controls the smoothness heterogeneity, with larger $\nu$ corresponding to larger smoothness heterogeneity. 
Under this model, the variability of the gradient norms is primarily caused by the differences between $b_i$'s, which corresponds to the context-shift. As a result, we expect that sampling according to $\mathbf{p}^{IS}$ would perform similarly to oracle optimal sampling.
Note that in this setting, we have $L_i=\Vert a_i \Vert^2$, so we set $p^{IS}_i = \Vert a_i \Vert^2 / (\sum^n_{j=1}\Vert a_j \Vert^2)$ for all $i=1,\ldots,n$.
We set $n=100$, $d=10$, and report results averaged across 10 independent runs.

\subsection{SGD~v.s.~L-SVRG}
\begin{figure}[t]
\centering
\includegraphics[width=0.8\textwidth]{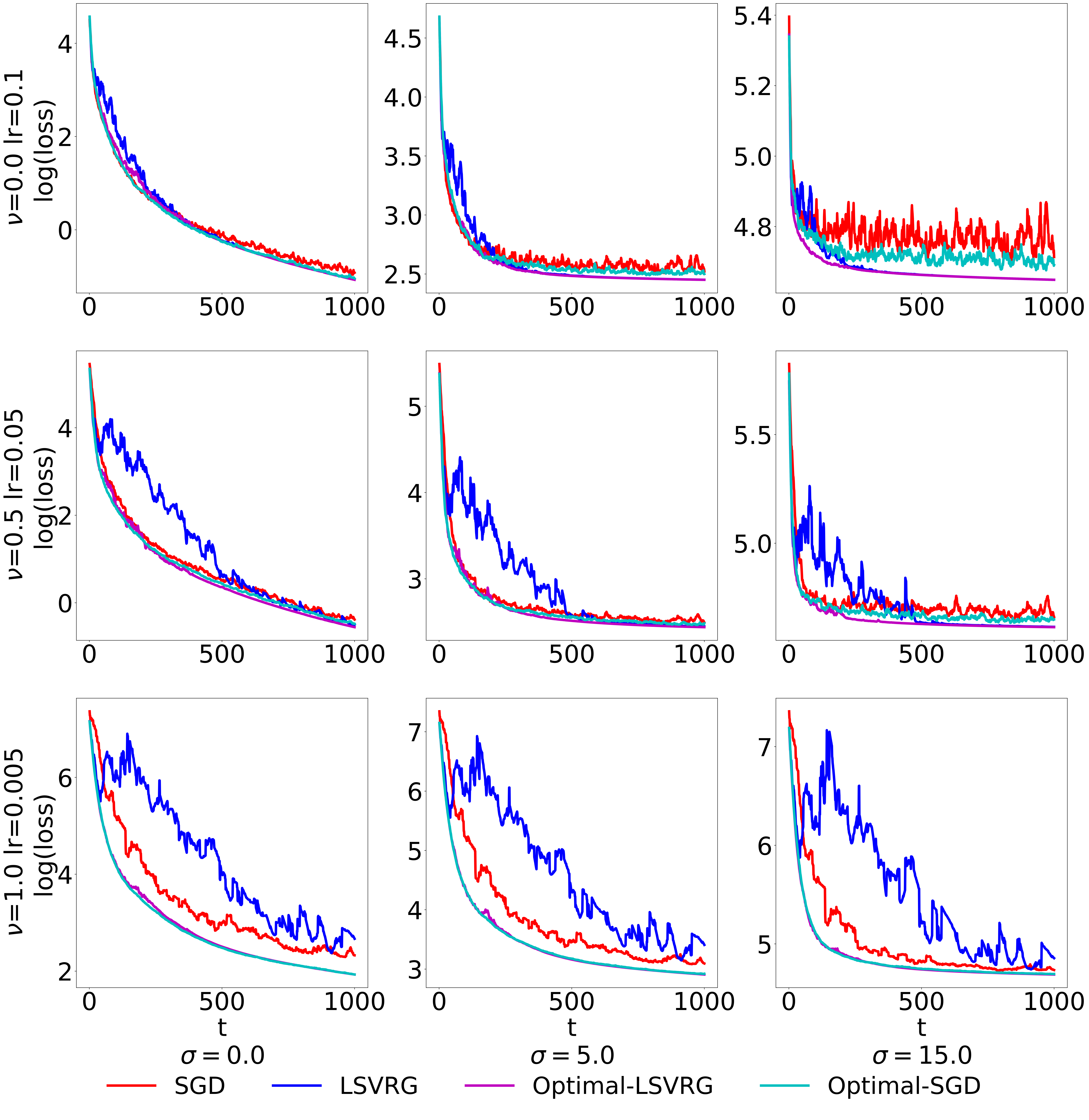}
\caption{Comparison of four methods: SGD, Optimal-SGD, L-SVRG, and Optimal-LSVRG. Columns correspond to different $\sigma$ values, while rows correspond to different $\nu$ values. The stepsize the same for all algorithms, and is $0.1$ when $\nu=0$, is $0.05$ when $\nu=0.5$, and is $0.005$ when $\nu=1.0$.}
\label{fig:sgd-lsvrg}
\end{figure}

We compare SGD and Optimal-SGD with L-SVRG and Optimal-LSVRG. From the results in Figure~\ref{fig:sgd-lsvrg}, we have three main observations. First, with large optimization heterogeneity (rightmost column),  Optimal-LSVRG converges faster and can achieve a smaller optimal value compared to Optimal-SGD. This observation is consistent with our conclusion in Section~\ref{sec:convg-as-lsvrg} that the control variate is responsible for improving optimization heterogeneity. Second, Optimal-LSVRG always improves the performance over L-SVRG, with the largest improvement observed when the smoothness heterogeneity is large (bottom row). This observation illustrates our conclusion that importance sampling can improve smoothness heterogeneity. Finally, we observe that L-SVRG is more vulnerable to the smoothness heterogeneity compared to SGD, which can also be seen from the condition on the step size: we need $\eta \leq 1/(6 L_{\max})$ for L-SVRG (Theorem 5 of~\cite{Kovalev2020Dont}) and we only need $\eta \leq 1/ L_{\max}$ for SGD (Theorem 2.1 of~\cite{Needell2016Stochastic}) to ensure convergence.

\subsection{Non-uniform sampling for L-SVRG and L-Katyusha}
\label{sec:exp-non-uni}

We compare L-SVRG and L-Katyusha with different sampling strategies.
Figure~\ref{fig:as-lsvrg} shows results for L-SVRG.
We observe that the performances of Optimal-LSVRG and IS-LSVRG are similar, since the context-shift dominates the variability of the gradient norms.
Furthermore, we see that adaptive sampling improves the performance of L-SVRG compared to uniform sampling. The improvement is most significant when the smoothness heterogeneity is large (bottom row).

Figure~\ref{fig:as-lkatyusha} shows results for L-Katyusha.
We set the step size according to Theorem~\ref{thm:as-lkatyusha-strong-cvx}.
The oracle optimal sampling distribution results in considerable improvement over sampling from $\mathbf{p}^{IS}$ after adding acceleration.
In addition, we note that adaptive sampling efficiently improves over uniform sampling.

\begin{figure}[t]
\centering
\includegraphics[width=0.8\textwidth]{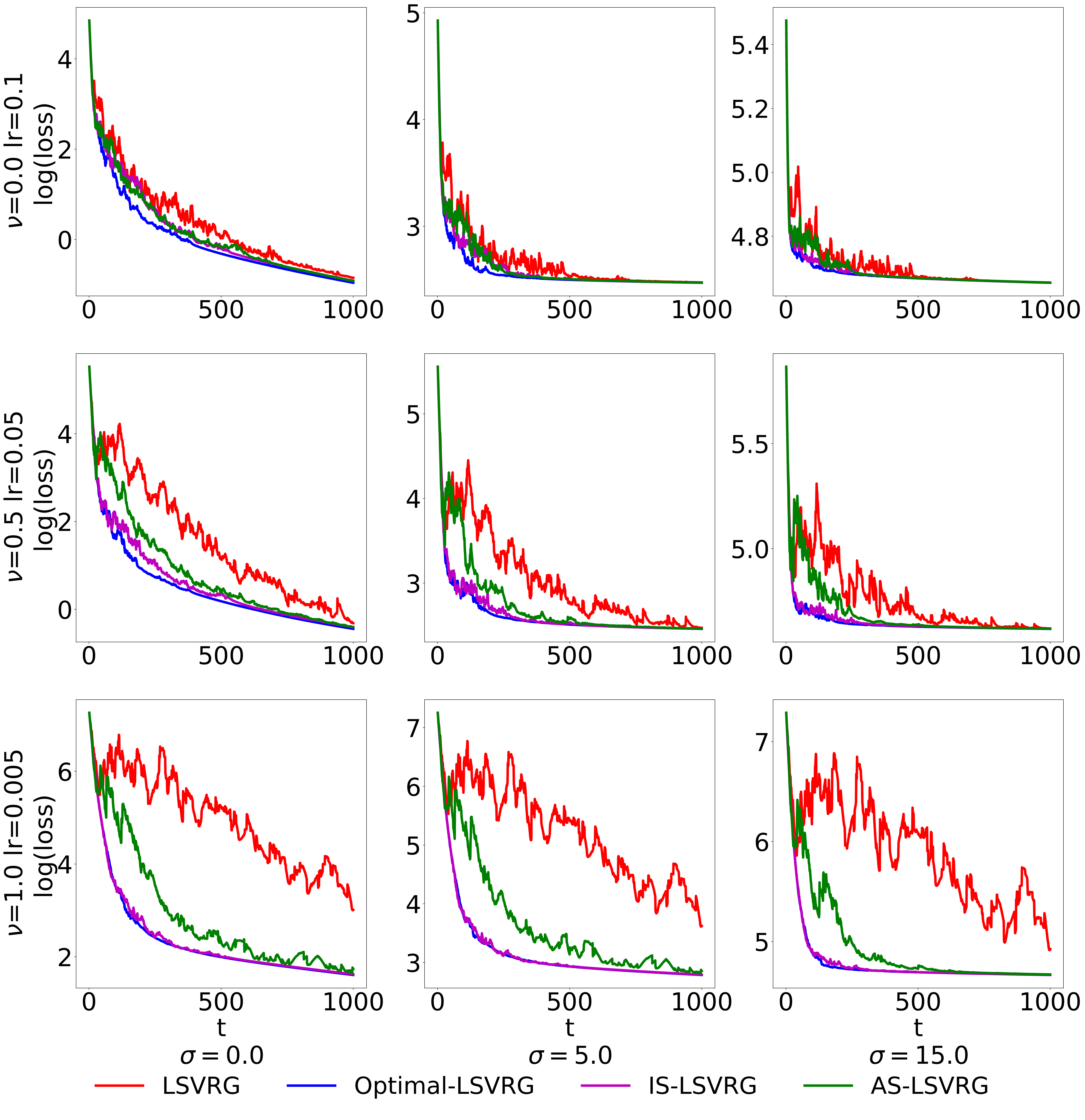}
\caption{Comparison of four methods: L-SVRG, Optimal-LSVRG, IS-LSVRG, AS-LSVRG. Columns correspond to different $\sigma$ values, and rows correspond to different $\nu$ values. The stepsize is the same for all algorithms, and is $0.1$ when $\nu=0$, is $0.05$ when $\nu=0.5$, and is $0.005$ when $\nu=1.0$.}
\label{fig:as-lsvrg}
\end{figure}

\begin{figure}[t]
\centering
\includegraphics[width=0.8\textwidth]{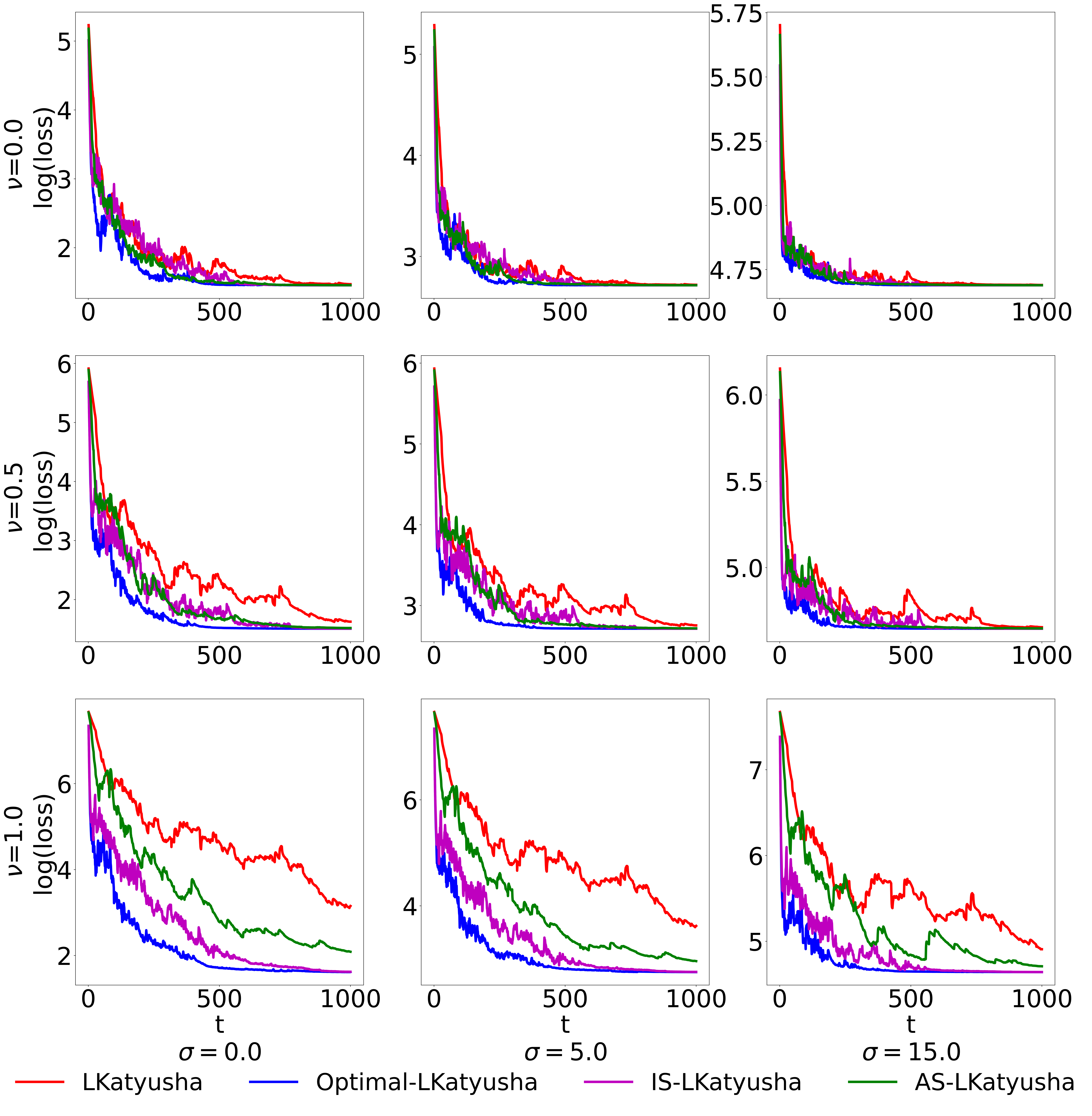}
\caption{Comparison of four methods: L-Katyusha, Optimal-LKatyusha, IS-LKatyusha, AS-LKatyusha. Columns correspond to different $\sigma$ values, and rows correspond to different $\nu$ values. The stepsizes are set based on Theorem~\ref{thm:as-lkatyusha-strong-cvx}.}
\label{fig:as-lkatyusha}
\end{figure}

\subsection{Importance sampling v.s. adaptive sampling}
\label{sec:is-vs-as-exp}

We provide an example where adaptive sampling can perform better than sampling from $\mathbf{p}^{IS}$. We generate data from a linear regression model $b_i = \langle \theta^{\star}, a_i \rangle + \zeta_i$, where $\zeta_i \iidsim N(0, 0.5^2)$ and, for each $a_i \in \mathbb{R}^d$, we choose uniformly at random one dimension, denoted as $\text{supp}(i) \in [d]$, and set it to a nonzero value, while the remaining dimensions are set to zero. The nonzero value $a_i[\text{supp}(i)]$ is generated from $N(1.0, 0.1^2)$. The entries of $\theta^{\star}$ are generated i.i.d.~from $e^{N(0, \nu^2)}$. Therefore, $\nu$ controls the variance of entries of $\theta^{\star}$. We let $n=300$ and $d=30$.

In this setting, we have $L_i=\Vert a_i \Vert^2 = \vert a_i[\text{supp}(i)] \vert^2 \approx 1.0$, and thus sampling from $\mathbf{p}^{IS}$ will perform similarly to uniform sampling. On the other hand, we have
\[ 
\Vert \nabla f_i(x) \Vert = \left\vert \left( x - \theta^{\star} \right) [\text{supp}(i)]  \cdot a_i[\text{supp}(i)] + \zeta_i \right\vert.
\]
Thus, the variability of the gradient norms is mainly determined by the variance of entries of $\theta^{\star}$.
For each $i \in [n]$, we can understand $f_i$ as a separate univariate quadratic function with the minimizer $\theta^{\star}[\text{supp}(i)]$, and the variance of entries of $\theta^{\star}$ can be understood as the concept-shift.
In this case, we expect that sampling from $\mathbf{p}^{IS}$ will not perform as well as oracle optimal sampling or adaptive sampling.

We implement Optimal-LSVRG, IS-LSVRG, and AS-LSVRG with the stochastic gradient obtained from a mini-batch of size $5$, rather than choosing only one random sample, to allow adaptive sampling to explore more efficiently.\footnote{AdaOSMD relies on the feedback obtained by exploration to update the sampling distribution. A larger batch size will allow adaptive sampling to explore more efficiently (in other words, to 'see' more samples in each iteration). Compared with the fixed sampling distribution, where a larger batch size is only reducing the variance of a stochastic gradient, a larger batch size will also help adaptive sampling to make faster updates of the sampling distribution. Therefore, the adaptive sampling strategy is generally more sensitive to batch size than sampling with a fixed distribution.} The step size is set to $0.3$.
Figure~\ref{fig:as-lsvrg-concept} presents the results.
We see that as $\nu$ increases, the gap between oracle optimal sampling and sampling from $\mathbf{p}^{IS}$ increases as well, due to the concept-shift.
In addition, we see that adaptive sampling also performs better than sampling from $\mathbf{p}^{IS}$, despite the fact that it does not use prior knowledge, since adaptive sampling can asymptotically approximate oracle optimal sampling.

\begin{figure}[ht!]
\centering
\includegraphics[width=0.8\textwidth]{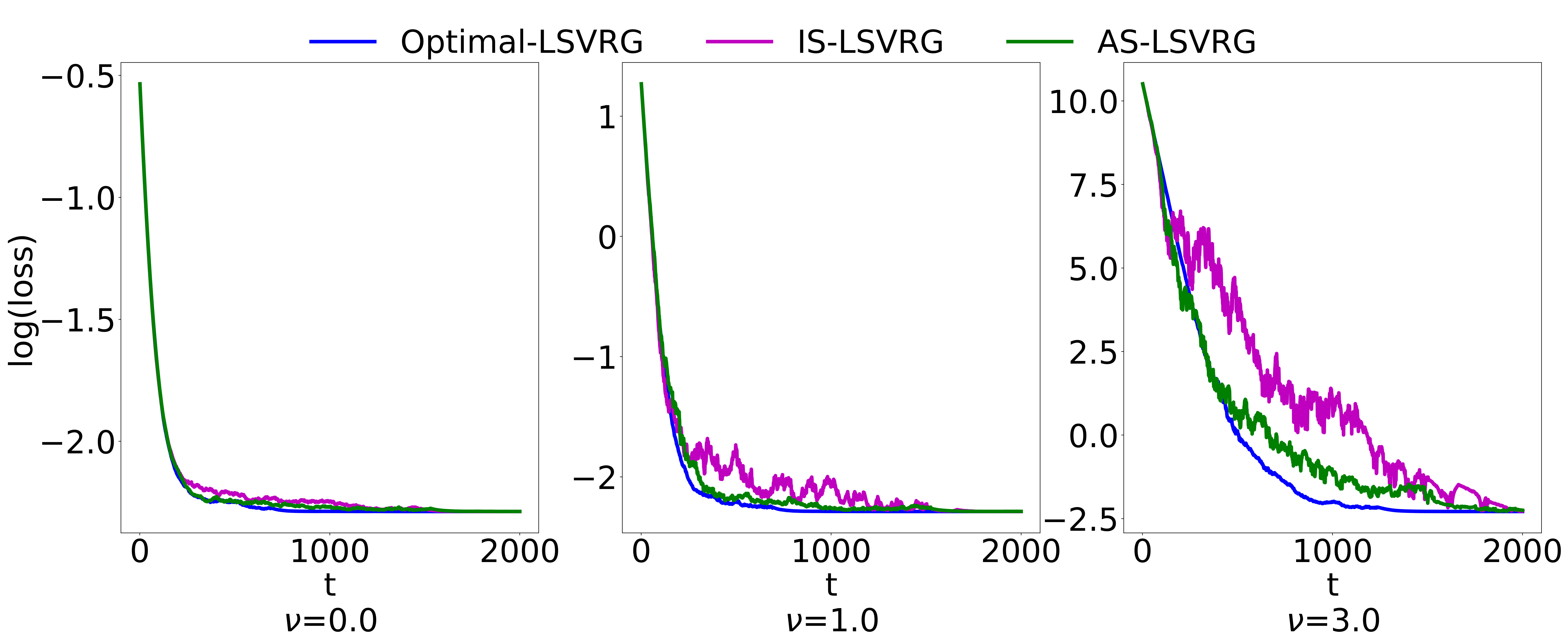}
\caption{Optimal-LSVRG v.s. IS-LSVRG v.s. AS-LSVRG. Columns correspond to different $\nu$ values.}
\label{fig:as-lsvrg-concept}
\end{figure}

\subsection{Nonconvex Objective}
\label{sec:exp-nonconvex}

\begin{figure}[ht!]
\centering
\includegraphics[width=0.8\textwidth]{Nonconvex.png}
\caption{Comparison of L-SVRG, IS-LSVRG and AS-LSVRG with nonconvex objective. Columns correspond to different $\sigma$ values, and rows correspond to different $\nu$ values. The stepsize of each method is tuned such that the method converges in the fastest speed.}
\label{fig:as-lsvrg-concept-nonconvex}
\end{figure}

In this section, we compare L-SVRG, IS-LSVRG, and AS-LSVRG with nonconvex objectives under a similar setting as in Section~\ref{sec:exp-non-uni}.
We increase $d$ to $100$ and $n$ to $1000$. Instead of fitting the data with linear regression, we use a two-layer neural network with 10 neurons in the hidden layer. While we still minimize the mean squared error loss, the objective function is now nonconvex due to the nonconvexity of the neural network model. To estimate $\mathbf{p}^{IS}$, we still set $p^{IS}_i = \Vert a_i \Vert^2 / (\sum^n_{j=1}\Vert a_j \Vert^2)$ as in Section~\ref{sec:exp-non-uni}. For AS-LSVRG, we use the OSMD Sampler (Algorithm~\ref{alg:OSMD-sampler}). Both the optimization step size and the learning rate of the OSMD Sampler are tuned such that AS-LSVRG converges at the fastest speed.

The result is shown in Figure~\ref{fig:as-lsvrg-concept-nonconvex}. We see that adaptive sampling still obtains an advantage over uniform sampling and importance sampling, especially when the smoothness heterogeneity is large. It is worth noting that $p^{IS}$ does not perform well in this case. We suspect that this is because $\Vert a_i \Vert^2$ is a poor estimate of $L_i$ in this case; however, it is unclear if there exists an easy way to accurately estimate $L_i$ with nonconvex models. This result justifies the motivation of adaptive sampling since it can achieve advantageous performance over uniform sampling without the need to estimate the smoothness constants.

\section{Real data experiment}
\label{sec:real-data-exp}
We use the \texttt{w8a} dataset from LibSVM classification tasks~\cite{zeng2008fast, chang2011libsvm}. On a real dataset, obtaining the theoretically optimal sampling distribution is infeasible, while constructing $\mathbf{p}^{IS}$ requires access to Lipschitz constants of each loss function. Therefore, here we only show the performance of L-SVRG and AS-LSVRG on the following logistic regression problem:
\begin{align*}
\min_{x \in \mathbb{R}^d} \, -\frac{1}{n}\sum_{i = 1}^n (y_i\log p_i + (1 - y_i)\log(1 - p_i)),
\end{align*}
where $p_i(x) = p_i = (1 + \exp{-x^Tz_i})^{-1}$, $y_i \in \{0, 1\}$ is the response variable, and $z_i$ is the $d$-dimensional feature vector. The stepsizes for both L-SVRG and AS-SVRG are initially tuned over the grid $\{10^{-2}, 10^{-1.5}, \dots, 10^{2}\}$. The initial search showed us that the optimal stepsize should be in the interval $(0, 1)$. Therefore, we tune the stepsizes over a grid of 20 evenly spaced points on $[0.05, 1]$. The two algorithms are then used to train the model for 1000 iterations, repeated 10 times, and the best stepsize is chosen by picking the one that corresponds to the lowest loss at the 1000-th iteration. 

\begin{figure}[t]
\centering
\includegraphics[width=0.7\textwidth]{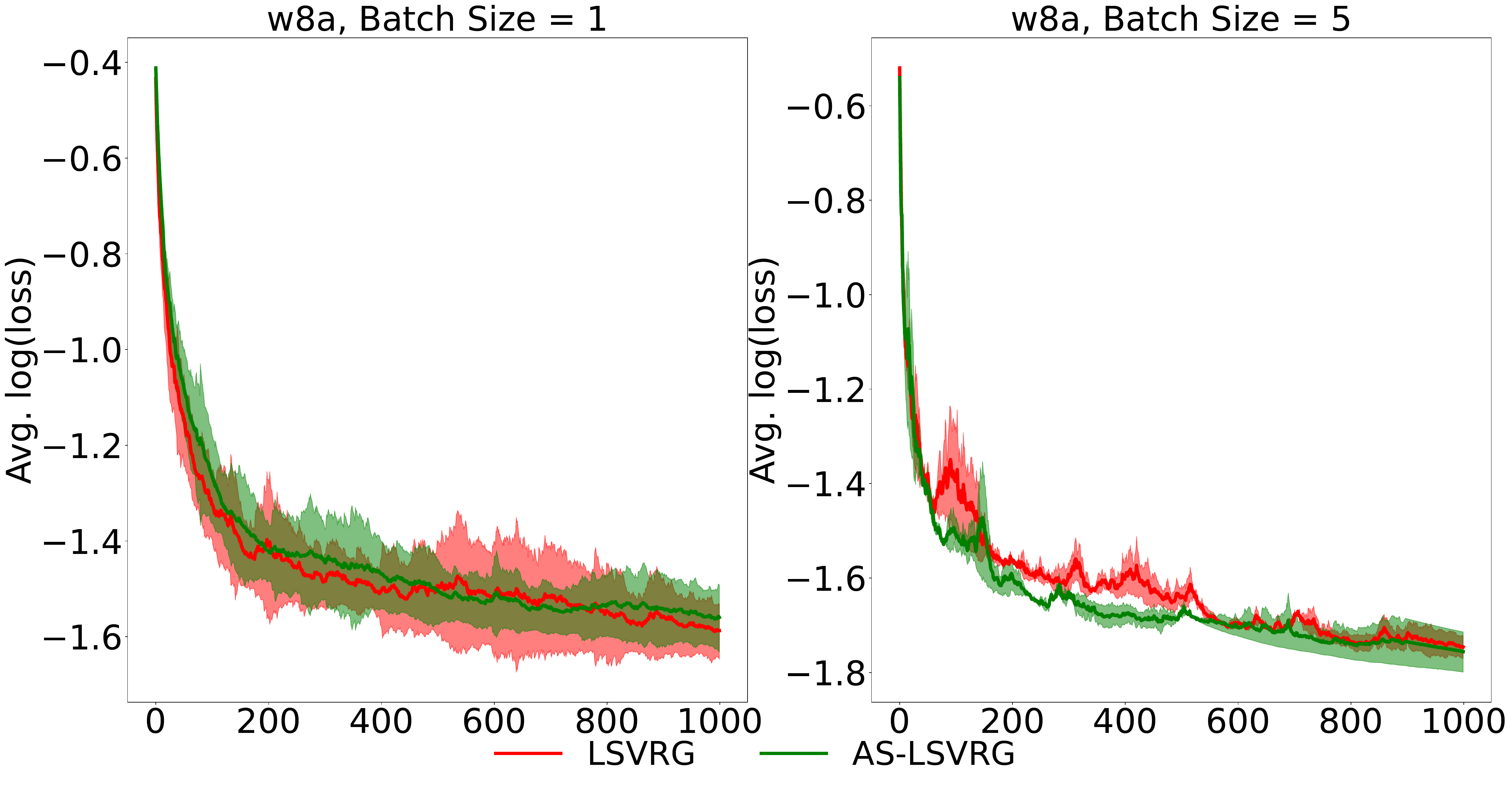}
\caption{LSVRG v.s. AS-LSVRG. Columns correspond to different batch sizes.}
\label{fig:w8a_lsvrg}
\end{figure}

\begin{figure}[t]
\centering
\includegraphics[width=0.7\textwidth]{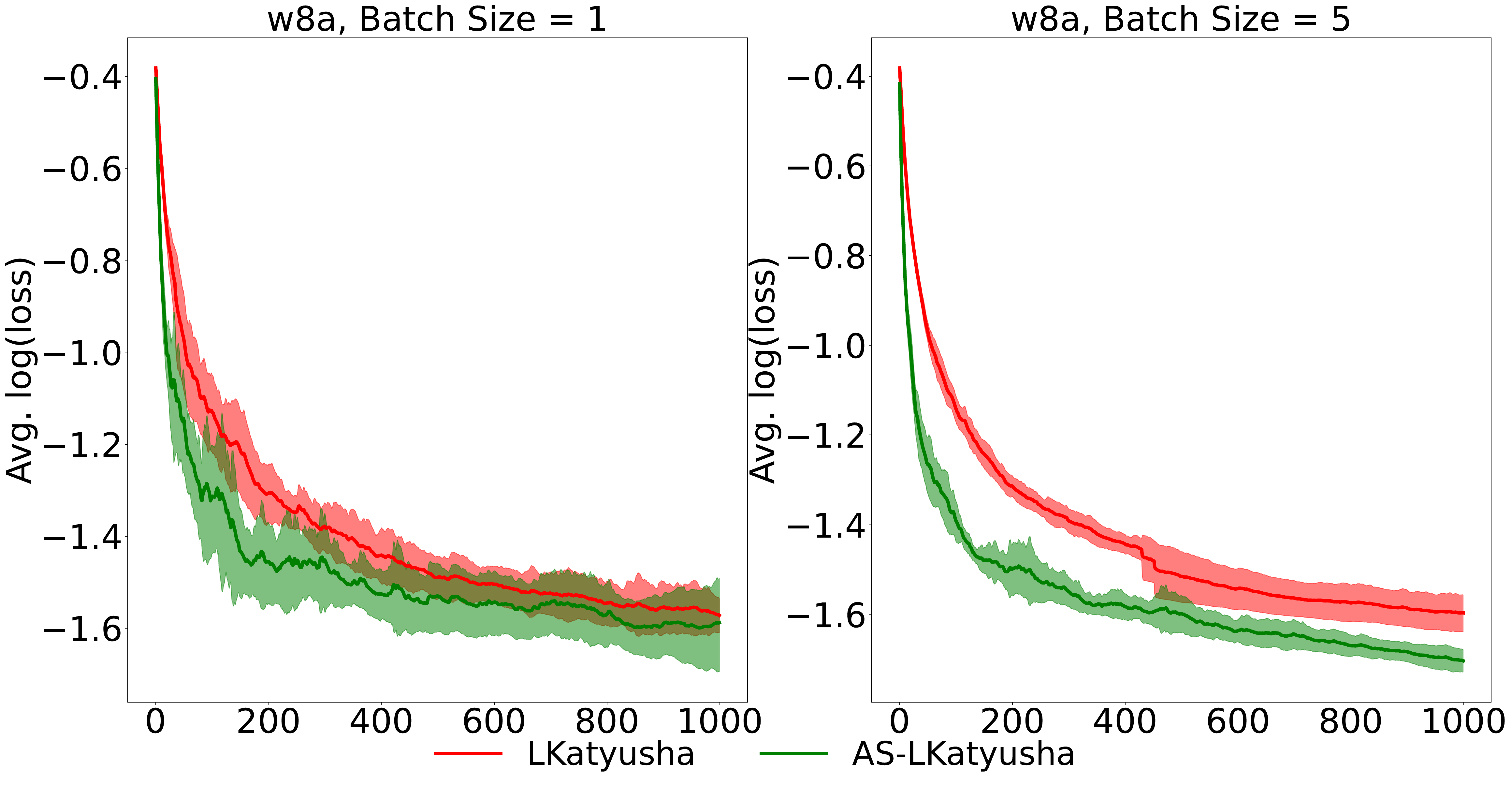}
\caption{L-Katyusha v.s. AS-LKatyusha. Columns correspond to different batch sizes. The stepsizes are set according to Theorem 3.2 from~\citep{Qian2021L} and Theorem~\ref{thm:as-lkatyusha-strong-cvx} in this paper.}
\label{fig:w8a_lkatyusha}
\end{figure}

Figure~\ref{fig:w8a_lsvrg} corresponds to the average log cross entropy loss over 10 runs against the number of iterations. The shaded region corresponds to the standard deviation of the loss. When the batch size is 1, AS-LSVRG and L-SVRG have similar convergence behaviour, but the standard deviation is reduced for AS-LSVRG. When the batch size is 5, AS-LSVRG significantly outperforms L-SVRG.

We illustrate the performance of L-Katyusha and AS-LKatyusha by solving the following $\ell_2$-regularized optimization problem:
\begin{equation*}
\min_{x \in \mathbb{R}^d} \, -\frac{1}{n}\sum_{i = 1}^n (y_i\log p_i + (1 - y_i)\log(1 - p_i)) + \frac{\mu}{2}\|x\|^2,
\end{equation*}
where $p_i = p_i(x)$ has the form as before and $\mu = 10^{-7}$ to ensure that the problem is strongly convex. Figure~\ref{fig:w8a_lkatyusha} shows results over 10 runs.
AS-LKatyusha significantly outperforms its uniform sampling counterpart. While some of the improvement in performance could be attributed to our superior dependence on the Lipschitz constant, the losses we obtain enjoy slightly reduced variances.

\section{Conclusion and future directions}
\label{sec:conclusion}
We studied the convergence behavior of L-SVRG and L-Katyusha when non-uniform sampling with a dynamic sampling distribution is used. Compared to previous research, we do not restrict ourselves to a fixed sampling distribution but allow it to change with iterations. This flexibility enables us to design the sampling distribution adaptively using the feedback from sampled observations. We do not need prior information, which can be computationally expensive to obtain in practice, to design a well-performing sampling distribution. Therefore, our algorithm is practically useful. We derive upper bounds on the convergence rate for any sampling distribution sequence for both L-SVRG and L-Katyusha under commonly used assumptions. Our theoretical results justify the usage of online learning to design the sequence of sampling distributions. More interestingly, our theory also explains when adaptive sampling with no prior knowledge can perform better than a fixed sampling distribution designed using prior knowledge. Extensive experiments on both synthetic and real data demonstrate our theoretical findings and illustrate the practical value of the methodology.

We plan to extend the adaptive sampling strategy to a broader class of stochastic optimization algorithms. For example, stochastic coordinate descent~\citep{Zhu2016Even} and stochastic non-convex optimization algorithms~\citep{Fang2018Spider}. In addition, exploring adaptive sampling with second-order methods, such as the stochastic Quasi-Newton method~\citep{byrd2016stochastic}, could be a fruitful future direction.

\FloatBarrier

\newpage

\bibliography{boxinz-papers}
\bibliographystyle{tmlr}

\newpage

\appendix

\section{Proof of Main Theorems}

\subsection{Proof of Theorem~\ref{thm:as-lsvrg-strong-cvx}}
\label{sec:proof-as-lsvrg-strong-cvx}

We use the proof technique from Theorem 5 of~\cite{Kovalev2020Dont}. The key step is to decompose the variance of the stochastic gradient. Let $\mathcal{F}_t = \sigma (x_0,w_0,x_1,w_1,\cdots,x_t,w_t)$ be the $\sigma$-algebra generated by $x_0,w_0,x_1,w_1,\cdots,x_t,w_t$, and let $\mathbb{E}_t[\cdot] \coloneqq \mathbb{E} [ \ \cdot \mid \mathcal{F}_t  ]$ be the conditional expectation given $\mathcal{F}_t$.

Note that $\mathbb{E}_t[g^t]=\nabla F (x^t)$. By Assumption~\ref{assump:strong-cvx}, we have
\begin{align*}
\mathbb{E}_t \left[ \left\Vert x^{t+1} - x^{\star}  \right\Vert^2 \right] 
& = \mathbb{E}_t \left[ \left\Vert x^{t} - \eta g^t - x^{\star}  \right\Vert^2 \right] \\
& = \left\Vert x^{t}  - x^{\star}  \right\Vert^2 - 2\eta \left\langle \nabla F(x^t) , x^t - x^{\star} \right\rangle + \eta^2 \mathbb{E}_t \left[ \left\Vert g^t  \right\Vert^2 \right] \\
& \leq \left\Vert x^{t}  - x^{\star}  \right\Vert^2 - 2\eta \left( F (x^t) - F(x^{\star}) - \frac{\mu}{2} \Vert x^t - x^{\star} \Vert^2 \right) + \eta^2 \mathbb{E}_t \left[ \left\Vert g^t  \right\Vert^2 \right] \\
& = (1-\eta \mu) \left\Vert x^{t}  - x^{\star}  \right\Vert^2 - 2 \eta \left( F (x^t) - F(x^{\star})  \right) + \eta^2 \mathbb{E}_t \left[ \left\Vert g^t  \right\Vert^2 \right]. \numberthis \label{eq:proof-wk-svrg-eq4}
\end{align*}
Furthermore, we have
\begin{align*}
\mathbb{E}_t \left[ \left\Vert g^t  \right\Vert^2 \right] & = \mathbb{E}_t \left[ \left\Vert g^t - \mathbb{E}_t \left[ g^t \right]  \right\Vert^2 \right] + \left\Vert  \mathbb{E}_t \left[ g^t \right] \right\Vert^2\\
& = V^t_e \left( \mathbf{p}^t \right) - \left\Vert \nabla F(x^t) - \nabla F(w^t) \right\Vert^2  + \left\Vert \nabla F(x^t)  \right\Vert^2 \\
& = V^t_e \left( \mathbf{p}^{IS} \right) - \left\Vert \nabla F(x^t) - \nabla F(w^t) \right\Vert^2 + \left\Vert \nabla F(x^t)  \right\Vert^2 + V^t_e \left( \mathbf{p}^t \right) - V^t_e \left( \mathbf{p}^{IS} \right) \\
& = \frac{\bar{L}}{n} \sum^n_{i=1} \frac{1}{L_i} \left\Vert \nabla f_i (x^t) -  \nabla f_i (w^t) \right\Vert^2 - \left\Vert \nabla F(x^t) - \nabla F(w^t) \right\Vert^2 \\
& \qquad \qquad + \left\Vert \nabla F(x^t)  \right\Vert^2 + V^t_e \left( \mathbf{p}^t \right) - V^t_e \left( \mathbf{p}^{IS} \right) \\
& \leq \frac{\bar{L}}{n} \sum^n_{i=1} \frac{1}{L_i} \left\Vert \nabla f_i (x^t) -  \nabla f_i (w^t) \right\Vert^2  + \left\Vert \nabla F(x^t)  \right\Vert^2 + V^t_e \left( \mathbf{p}^t \right) - V^t_e \left( \mathbf{p}^{IS} \right). \numberthis \label{eq:proof-wk-svrg-eq1}
\end{align*}
By Assumption~\ref{assump:cvx} and Assumption~\ref{assump:smooth} that $F(\cdot)$ is convex and $L_F$-smooth, we have
\begin{equation}\label{eq:proof-wk-svrg-eq2}
\left\Vert \nabla F(x^t)  \right\Vert^2 = \left\Vert \nabla F(x^t) - \nabla F (x^{\star}) \right\Vert^2 \leq 2 L_F \left( F(x^t) - F(x^{\star}) \right).
\end{equation}
With $\mathcal{D}^t$ in~\eqref{eq:D-def}, we have
\begin{align*}
\frac{\bar{L}}{n} \sum^n_{i=1} & \frac{1}{L_i} \left\Vert \nabla f_i (x^t) -  \nabla f_i (w^t) \right\Vert^2 \\
& \leq \frac{2 \bar{L}}{n} \sum^n_{i=1} \frac{1}{L_i} \left\Vert \nabla f_i (x^t) -  \nabla f_i (x^{\star}) \right\Vert^2 + \frac{2 \bar{L}}{n} \sum^n_{i=1} \frac{1}{L_i} \left\Vert \nabla f_i (w^t) -  \nabla f_i (x^{\star}) \right\Vert^2 \\
& \leq \frac{2 \bar{L}}{n} \sum^n_{i=1} \frac{1}{L_i} (2 L_i) \left( f_i (x^t) - f_i (x^{\star}) - \langle \nabla f_i (x^{\star}), x^t - x^\star \rangle \right) + 2 \bar{L} \mathcal{D}^t \\
& = 4 \bar{L} \left( F(x^t) - F(x^{\star}) \right) + 2 \bar{L} \mathcal{D}^t. \numberthis \label{eq:proof-wk-svrg-eq3}
\end{align*}
Combining \eqref{eq:proof-wk-svrg-eq1}-\eqref{eq:proof-wk-svrg-eq3}, we have
\begin{equation}\label{eq:proof-wk-svrg-eq5}
\mathbb{E}_t \left[ \left\Vert g^t  \right\Vert^2 \right] \leq 4 \bar{L} \left( F(x^t) - F(x^{\star}) \right) + 2 L_F \left( F(x^t) - F(x^{\star}) \right) + 2 \bar{L} \mathcal{D}^t + V^t_e \left( \mathbf{p}^t \right) - V^t_e \left( \mathbf{p}^{IS} \right).
\end{equation}
Combining~\eqref{eq:proof-wk-svrg-eq5} and~\eqref{eq:proof-wk-svrg-eq4}, we have
\begin{multline*}
\mathbb{E}_t \left[ \left\Vert x^{t+1} - x^{\star}  \right\Vert^2 \right]  \\
\leq (1-\eta \mu) \left\Vert x^{t}  - x^{\star}  \right\Vert^2 
- 2 \eta (1 - 2\eta \bar{L} - \eta L_F ) \left( F (x^t) - F(x^{\star})  \right) \\
+ 2 \eta^2 \bar{L} \mathcal{D}^t 
+ \eta^2 \left\{ V^t_e \left( \mathbf{p}^t \right) - V^t_e \left( \mathbf{p}^{IS} \right) \right\}.
\end{multline*}
Using Lemma~\ref{lemma:D-recurs}, for any $\beta>0$, we have
\begin{multline*}
\mathbb{E}_t \left[ \left\Vert x^{t+1} - x^{\star}  \right\Vert^2 \right] + \beta \mathbb{E}_t \left[ \mathcal{D}^{t+1} \right] \\
\leq (1-\eta \mu) \left\Vert x^{t}  - x^{\star}  \right\Vert^2 - \left( 2 \eta (1 - 2\eta \bar{L} - \eta L_F ) - 2 \beta \rho \right)\left( F (x^t) - F(x^{\star})  \right) \\
+ \left( 2 \eta^2 \bar{L} + \beta (1-\rho) \right) \mathcal{D}^t  + \eta^2 \left\{ V^t_e \left( \mathbf{p}^t \right) - V^t_e \left( \mathbf{p}^{IS} \right) \right\}.
\end{multline*}
With $\beta=4\eta^2 \bar{L} / \rho$, we have
\begin{multline*}
\mathbb{E}_t \left[ \left\Vert x^{t+1} - x^{\star}  \right\Vert^2 \right] + \frac{4\eta^2 \bar{L}}{\rho} \mathbb{E}_t \left[ \mathcal{D}^{t+1} \right]\\ 
\leq (1-\eta \mu) \left\Vert x^{t}  - x^{\star}  \right\Vert^2 - 2 \eta (1 - 6\eta \bar{L} - \eta L_F) \left( F (x^t) - F(x^{\star})  \right) \\
+ \frac{4\eta^2 \bar{L}}{\rho} \left( 1 - \frac{\rho}{2} \right) \mathcal{D}^t  + \eta^2 \left\{ V^t_e \left( \mathbf{p}^t \right) - V^t_e \left( \mathbf{p}^{IS} \right) \right\}.
\end{multline*}
Since $\eta \leq 1/ (6 \bar{L} + L_F)$, we further have
\begin{equation*}
\mathbb{E}_t \left[ \left\Vert x^{t+1} - x^{\star}  \right\Vert^2 \right] + \frac{4\eta^2 \bar{L}}{\rho} \mathbb{E}_t \left[ \mathcal{D}^{t+1} \right] \leq (1-\eta \mu) \left\Vert x^{t}  - x^{\star}  \right\Vert^2 + \frac{4\eta^2 \bar{L}}{\rho} \left( 1 - \frac{\rho}{2} \right) \mathcal{D}^t  + \eta^2 \left\{ V^t_e \left( \mathbf{p}^t \right) - V^t_e \left( \mathbf{p}^{IS} \right) \right\}.
\end{equation*}
Recalling that 
\begin{equation*}
\alpha_1 \coloneqq \max \left\{ 1 -\eta \mu, 1 - \frac{\rho}{2}  \right\},
\end{equation*}
we have
\begin{equation*}
\mathbb{E}_t \left[ \left\Vert x^{t+1} - x^{\star}  \right\Vert^2 + \frac{4\eta^2 \bar{L}}{\rho} \mathcal{D}^{t+1} \right] \leq \alpha_1 \left( \left\Vert x^t - x^{\star}  \right\Vert^2 + \frac{4\eta^2 \bar{L}}{\rho} \mathcal{D}^t \right) + \eta^2 \left\{ V^t_e \left( \mathbf{p}^t \right) - V^t_e \left( \mathbf{p}^{IS} \right) \right\}.
\end{equation*}
Taking the full expectation on both sides and recursively repeating the above relationship from $t=T-1$ to $t=0$, we have
\begin{align*}
&\mathbb{E} \left[ \left\Vert x^{T} - x^{\star}  \right\Vert^2 + \frac{4\eta^2 \bar{L}}{\rho} \mathcal{D}^{T} \right]  \\
& \qquad \qquad \leq \alpha_1 \mathbb{E} \left[ \left\Vert x^{T-1} - x^{\star}  \right\Vert^2 + \frac{4\eta^2 \bar{L}}{\rho} \mathcal{D}^{T-1} \right] + \eta^2 \mathbb{E} \left[  V^{T-1}_e \left( \mathbf{p}^{T-1} \right) - V^{T-1}_e \left( \mathbf{p}^{IS} \right)  \right] \\
% & \leq \alpha^2_1 \mathbb{E} \left[ \left\Vert x^{T-2} - x^{\star}  \right\Vert^2 + \frac{4\eta^2 \bar{L}}{\rho} \mathcal{D}^{T-2} \right] + \eta^2 \mathbb{E} \left[  V^{T-1}_e \left( \mathbf{p}^{T-1} \right) - V^{T-1}_e \left( \mathbf{p}^{IS} \right)  \right] \\
% & + \eta^2  \alpha_1 \mathbb{E} \left[  V^{T-1}_e \left( \mathbf{p}^{T-2} \right) - V^{T-2}_e \left( \mathbf{p}^{IS} \right)  \right] \\
% & \vdots \\
& \qquad \qquad  \leq \alpha^T_1 \mathbb{E} \left[ \left\Vert x^0 - x^{\star}  \right\Vert^2 + \frac{4\eta^2 \bar{L}}{\rho} \mathcal{D}^0 \right] + \eta^2 \sum^T_{t=0} \alpha^{T-t}_1 \mathbb{E} \left[  V^t_e \left( \mathbf{p}^t \right) - V^t_e \left( \mathbf{p}^{IS} \right)  \right].
\end{align*}

\subsection{Proof of Theorem~\ref{thm:as-lsvrg-weak-cvx}}
\label{sec:proof-as-lsvrg-weak-cvx}

We use the technique from Theorem 17 of~\cite{Qian2021L}. The key difference here is the decomposition of the variance of the stochastic gradient. Let
\begin{equation}\label{eq:Xi-def}
\Xi^t \coloneqq  \frac{1}{2\eta_t}  \left\Vert x^t - x^{\star} \right\Vert^2 + \frac{6 \eta_t \bar{L} (1-\rho)}{5\rho}  \mathcal{D}^t.
\end{equation}
Let $\mathcal{F}_t = \sigma (x_0,w_0,x_1,w_1,\cdots,x_t,w_t)$ be the $\sigma$-algebra generated by $x_0,w_0,x_1,w_1,\cdots,x_t,w_t$, and let $\mathbb{E}_t[\cdot] \coloneqq \mathbb{E} [\ \cdot \mid \mathcal{F}_t  ]$ be the conditional expectation given $\mathcal{F}_t$. 

Note that $\mathbb{E}_t [g^t]=\nabla F (x^t)$. We have
\begin{align*}
F(x^{\star}) & \geq F (x^t) + \langle \nabla F (x^t), x^{\star} - x^t \rangle \\
& = F (x^t) + \mathbb{E}_t \left[ \langle  g^t, x^{\star} - x^t \rangle \right] \\
& = F (x^t) + \mathbb{E}_t \left[ \langle  g^t, x^{\star} - x^{t+1} \rangle \right] + \mathbb{E}_t \left[ \langle  g^t, x^{t+1} - x^t \rangle \right] \\
& = F (x^t) + \mathbb{E}_t \left[ \langle  g^t, x^{\star} - x^{t+1} \rangle \right] + \mathbb{E}_t \left[ \langle  g^t - \nabla F(x^t), x^{t+1} - x^t \rangle \right] + \mathbb{E}_t \left[ \langle \nabla F(x^t), x^{t+1} - x^t \rangle \right]. \numberthis \label{eq:proof-thm1-eq1} 
\end{align*}
By Assumption~\ref{assump:cvx} and~\ref{assump:smooth}, we have
\begin{equation*}
F(x^{t+1}) - F(x^t) - \langle \nabla F(x^t), x^{t+1} - x^t \rangle \leq \frac{L_F}{2} \left\Vert x^{t+1} - x^t \right\Vert^2.
\end{equation*}
Thus, 
\begin{equation*}
F(x^t) + \langle \nabla F(x^t), x^{t+1} - x^t \rangle \geq F(x^{t+1}) - \frac{L_F}{2} \left\Vert x^{t+1} - x^t \right\Vert^2.
\end{equation*}
Combined with~\eqref{eq:proof-thm1-eq1}, we have
\begin{multline}\label{eq:proof-thm1-eq2}
F(x^{\star}) \geq \mathbb{E}_t \left[ F (x^{t+1}) \right] - \frac{L_F}{2} \mathbb{E}_t \left[ \left\Vert x^{t+1} - x^t  \right\Vert^2 \right] \\
+ \mathbb{E}_t \left[ \left\langle g^t - \nabla F(x^t), x^{t+1} - x^t \right\rangle \right] + \mathbb{E}_t \left[ \left\langle g^t, x^{\star} - x^{t+1} \right\rangle \right].
\end{multline}
Since $\langle a, b \rangle \leq \frac{1}{2 \beta} \Vert a \Vert^2 + \frac{\beta}{2} \Vert b \Vert^2$ for all $a,b \in \mathbb{R}^d$ and $\beta>0$ 
by Young's inequality, we have
\begin{equation*}
\mathbb{E}_t \left[ \langle g^t - \nabla F(x^t), x^t - x^{t+1} \rangle \right] \leq \frac{\beta}{2} \mathbb{E}_t \left[  \left\Vert g^t - \nabla F(x^t) \right\Vert^2  \right] + \frac{1}{2\beta} \mathbb{E}_t \left[  \left\Vert x^t - x^{t+1} \right\Vert^2 \right],
\qquad \beta>0.
\end{equation*}
Equivalently,
\begin{equation*}
\mathbb{E}_t \left[ \langle g^t - \nabla F(x^t), x^{t+1} - x^t \rangle \right] \geq - \frac{\beta}{2} \mathbb{E}_t \left[  \left\Vert g^t - \nabla F(x^t) \right\Vert^2  \right] - \frac{1}{2\beta} \mathbb{E}_t \left[  \left\Vert x^{t+1} - x^t \right\Vert^2 \right],
\qquad \beta>0.
\end{equation*}
By Lemma~\ref{lemma:samp-var-bd}, we then have
\begin{multline}\label{eq:proof-thm1-eq3}
\mathbb{E}_t \left[ \langle g^t - \nabla F(x^t), x^{t+1} - x^t \rangle \right] 
\\
\geq -2 \beta \bar{L} \left(  F (x^t) - F(x^\star) \right) - \frac{\beta \bar{L}}{n} \sum^n_{i=1} \frac{1}{L_i} \left\Vert \nabla f_i (w^t) - \nabla f_i (x^{\star}) \right\Vert^2 \\
- \frac{\beta}{2} \left\{ V^t_e \left( \mathbf{p}^t \right) - V^t_e \left( \mathbf{p}^{IS} \right) \right\} - \frac{1}{2\beta} \mathbb{E}_t \left[  \left\Vert x^{t+1} - x^t \right\Vert^2 \right].
\end{multline}
Combine \eqref{eq:proof-thm1-eq2}-\eqref{eq:proof-thm1-eq3} and noting that
\begin{equation*}
\left\langle g^t, x^{\star} - x^{t+1} \right\rangle 
= \frac{1}{\eta} \left\langle x^{t+1} - x^t, x^{\star} - x^{t+1} \right\rangle = \frac{1}{2\eta} \left\Vert x^t - x^{t+1}  \right\Vert^2 + \frac{1}{2\eta} \left\Vert x^{t+1} - x^{\star} \right\Vert^2 - \frac{1}{2\eta} \left\Vert x^t - x^{\star} \right\Vert^2,
\end{equation*}
we have
\begin{align*}
F(x^{\star}) & \geq \mathbb{E}_t \left[ F (x^{t+1}) \right] - \frac{L_F}{2} \mathbb{E}_t \left[ \left\Vert x^{t+1} - x^t  \right\Vert^2 \right] - \frac{\beta \bar{L}}{n} \sum^n_{i=1} \frac{1}{L_i} \left\Vert \nabla f_i (w^t) - \nabla f_i (x^{\star}) \right\Vert^2 \\
& \quad\qquad - \frac{\beta}{2} \left\{ V^t_e \left( \mathbf{p}^t \right) - V^t_e \left( \mathbf{p}^{IS} \right) \right\} - \frac{1}{2\beta} \mathbb{E}_t \left[  \left\Vert x^{t+1} - x^t \right\Vert^2 \right] \\
& \quad\qquad + \frac{1}{2\eta} \mathbb{E}_t \left[ \left\Vert x^t - x^{t+1}  \right\Vert^2 \right] + \frac{1}{2\eta} \mathbb{E}_t \left[  \left\Vert x^{t+1} - x^{\star} \right\Vert^2 \right] - \frac{1}{2\eta}  \left\Vert x^t - x^{\star} \right\Vert^2 \\
& = \mathbb{E}_t \left[ F (x^{t+1}) \right] + \left( \frac{1}{2\eta} - \frac{L_F}{2} - \frac{1}{2\beta} \right)   \mathbb{E}_t \left[ \left\Vert x^t - x^{t+1}  \right\Vert^2 \right] + \frac{1}{2\eta} \mathbb{E}_t \left[  \left\Vert x^{t+1} - x^{\star} \right\Vert^2 \right] - \frac{1}{2\eta}   \left\Vert x^t - x^{\star} \right\Vert^2 \\
& \quad\qquad -2 \beta \bar{L} \left(  F (x^t) - F(x^\star) \right) - \frac{\beta \bar{L}}{n} \sum^n_{i=1} \frac{1}{L_i} \left\Vert \nabla f_i (w^t) - \nabla f_i (x^{\star}) \right\Vert^2 - \frac{\beta}{2} \left\{ V^t_e \left( \mathbf{p}^t \right) - V^t_e \left( \mathbf{p}^{IS} \right) \right\}.
\end{align*}
Therefore,
\begin{multline*}
2 \beta \bar{L} \left(  F (x^t) - F(x^\star) \right) + \frac{\beta}{2} \left\{ V^t_e \left( \mathbf{p}^t \right) - V^t_e \left( \mathbf{p}^{IS} \right) \right\} + \frac{1}{2\eta} \left\Vert x^t - x^{\star} \right\Vert^2 \\
\geq \mathbb{E}_t \left[ F (x^{t+1}) \right] - F(x^{\star}) + \left( \frac{1}{2\eta} - \frac{L_F}{2} - \frac{1}{2\beta} \right) \mathbb{E}_t \left[ \left\Vert x^t - x^{t+1}  \right\Vert^2 \right] \\ 
+ \frac{1}{2\eta} \mathbb{E}_t \left[  \left\Vert x^{t+1} - x^{\star} \right\Vert^2 \right] - \frac{\beta \bar{L}}{n} \sum^n_{i=1} \frac{1}{L_i} \left\Vert \nabla f_i (w^t) - \nabla f_i (x^{\star}) \right\Vert^2.
\end{multline*}
Then by definition of $\mathcal{D}^t$ in~\eqref{eq:D-def} and Lemma~\ref{lemma:D-recurs}, for any $\alpha>0$, we have
\begin{multline*}
2 (\beta \bar{L} + \alpha \rho) \left(  F (x^t) - F(x^\star) \right) + \frac{\beta}{2} \left\{ V^t_e \left( \mathbf{p}^t \right) - V^t_e \left( \mathbf{p}^{IS} \right) \right\} + \frac{1}{2\eta}   \left\Vert x^t - x^{\star} \right\Vert^2  + \alpha (1-\rho)  \mathcal{D}^t \\
\geq \mathbb{E}_t \left[ F (x^{t+1}) \right] - F(x^{\star}) + \frac{1}{2} \left( \frac{1}{\eta} - L_F - \frac{1}{\beta} \right) \mathbb{E}_t \left[ \left\Vert x^t - x^{t+1}  \right\Vert^2 \right] \\
+ \frac{1}{2\eta} \mathbb{E}_t \left[  \left\Vert x^{t+1} - x^{\star} \right\Vert^2 \right] + \left( \alpha - \beta \bar{L} \right) \mathbb{E}_t \left[ \mathcal{D}^{t+1}  \right].
\end{multline*}
Let $\beta=\frac{6}{5} \eta$ and $\alpha = \frac{\beta \bar{L}}{\rho} = \frac{6 \eta \bar{L}}{5\rho}$. Since $\eta \leq \frac{1}{6 L_F}$, we have $\frac{1}{\eta} - L_F - \frac{1}{\beta} = \frac{1}{6 \eta} - L_F \leq 0$. Then
\begin{align*}
\frac{4}{5} &\left(  F (x^t) - F(x^\star) \right) + \frac{3}{5} \eta \left\{ V^t_e \left( \mathbf{p}^t \right) - V^t_e \left( \mathbf{p}^{IS} \right) \right\} + \frac{1}{2\eta}  \left\Vert x^t - x^{\star} \right\Vert^2  + \frac{6 \eta \bar{L} (1-\rho)}{5\rho}   \mathcal{D}^t \\
& \geq \frac{24}{5} \eta \bar{L} \left(  F (x^t) - F(x^\star) \right) + \frac{3}{5} \eta \left\{ V^t_e \left( \mathbf{p}^t \right) - V^t_e \left( \mathbf{p}^{IS} \right) \right\} + \frac{1}{2\eta}  \left\Vert x^t - x^{\star} \right\Vert^2  + \frac{6 \eta \bar{L} (1-\rho)}{5\rho}   \mathcal{D}^t \\
& \geq \mathbb{E}_t \left[ F (x^{t+1}) - F(x^{\star}) \right]  + \frac{1}{2\eta} \mathbb{E}_t \left[  \left\Vert x^{t+1} - x^{\star} \right\Vert^2 \right] + \frac{6 \eta \bar{L} (1-\rho)}{5\rho} \mathbb{E}_t \left[ \mathcal{D}^{t+1}  \right].
\end{align*}
From the definition of $\Xi_t$ in~\eqref{eq:Xi-def}, we have
\begin{equation*}
\mathbb{E}_t \left[ F (x^{t+1}) - F(x^{\star}) \right] + \mathbb{E}_t \left[ \Xi^{t+1} \right] - \Xi^t   \leq \frac{4}{5} \left(  F (x^t) - F(x^\star) \right)  + \frac{3}{5} \eta \left\{ V^t_e \left( \mathbf{p}^t \right) - V^t_e \left( \mathbf{p}^{IS} \right) \right\}
\end{equation*}
Taking the full expectation on both sides and recursively repeating the above relationship from $t=T$ to $t=0$, we have
\begin{equation*}
\sum^{T}_{t=0} \mathbb{E} \left[ F (x^{t+1}) - F(x^{\star}) + \Xi^{t+1} - \Xi^0 \right] \leq \frac{4}{5} \sum^{T}_{t=0} \mathbb{E} \left[ F (x^t) - F(x^\star) \right] + \frac{3}{5} \eta \sum^{T}_{t=0} \mathbb{E} \left[ V^t_e \left( \mathbf{p}^t \right) - V^t_e \left( \mathbf{p}^{IS} \right) \right],
\end{equation*}
which implies that
\begin{align*}
\frac{1}{5} \sum^{T}_{t=1} \mathbb{E} & \left[ F (x^t) - F(x^\star) \right] \\
& \leq \mathbb{E} \left[ F (x^{T+1}) - F(x^{\star}) + \Xi^{T+1} \right] + \frac{1}{5} \sum^{T}_{t=1} \mathbb{E} \left[ F (x^t) - F(x^\star) \right] \\
& \leq \frac{4}{5} \left( F (x^0) - F(x^{\star}) \right) + \Xi^0 + \frac{3}{5} \eta \sum^{T}_{t=0} \mathbb{E} \left[ V^t_e \left( \mathbf{p}^t \right) - V^t_e \left( \mathbf{p}^{IS} \right) \right].
\end{align*}
By convexity of $F(\cdot)$ and since $\hat{x}^T = (1/T) \sum^{T}_{t=1} x^t $, we have
\begin{equation*}
\mathbb{E} \left[ F (\hat{x}^T) -  F(x^\star) \right] \leq \frac{4}{T} \left( F (x^0) - F(x^{\star}) \right) + \frac{5 \Xi^0}{T} + \frac{3 \eta}{T} \sum^{T}_{t=0} \mathbb{E} \left[ V^t_e \left( \mathbf{p}^t \right) - V^t_e \left( \mathbf{p}^{IS} \right) \right].
\end{equation*}
Finally, by Lemma~\ref{lemma:F-cvx-smooth}, we have
\begin{align*}
\Xi^0 & = \frac{1}{2\eta}  \left\Vert x^0 - x^{\star} \right\Vert^2 + \frac{6 \eta \bar{L} (1-\rho)}{5\rho}  \mathcal{D}^0 \\
& \leq \frac{1}{2\eta}  \left\Vert x^0 - x^{\star} \right\Vert^2 + \frac{6 \eta \bar{L} (1-\rho)}{5\rho} \frac{1}{n} \sum^n_{i=1} \frac{1}{L_i} (2 L_i) \left( f_i (w^0) - f_i(x^{\star}) - \left\langle \nabla f_i (x^{\star}), x^t - x^{\star} \right\rangle \right) \\
& \leq \frac{1}{2\eta}  \left\Vert x^0 - x^{\star} \right\Vert^2 + \frac{12 \eta \bar{L} (1-\rho)}{5\rho}  \left( F(w^0) - F(x^{\star}) \right).
\end{align*}
Thus, we have
\begin{multline*}
\mathbb{E} \left[ F (\hat{x}^T) -  F(x^\star) \right] 
\leq \frac{4}{T} \left( F (x^0) - F(x^{\star}) \right) \\
+ \frac{5}{T} \left\{ \frac{1}{2\eta}  \left\Vert x^0 - x^{\star} \right\Vert^2 + \frac{12 \eta \bar{L} (1-\rho)}{5\rho}  \left( F(w^0) - F(x^{\star}) \right) \right\} 
+ \frac{3 \eta}{T} \sum^{T}_{t=0} \mathbb{E} \left[ V^t_e \left( \mathbf{p}^t \right) - V^t_e \left( \mathbf{p}^{IS} \right) \right].
\end{multline*}

\subsection{Proof of Theorem~\ref{thm:as-lkatyusha-strong-cvx}}
\label{sec:proof-thm-as-lkatyusha-strong-cvx}
We use the proof technique of Theorem 11 ion~\cite{Kovalev2020Dont}. The key step is to decompose the variance of the stochastic gradient. We let $\mathcal{F}_t = \sigma (x_0,w_0,v_0,z_0,\cdots,x_t,w_t,v_t,z_t)$ be the $\sigma$-algebra generated by $x_0,w_0,v_0,z_0,\cdots,x_t,w_t,v_t,z_t$, and let $\mathbb{E}_t[\cdot] \coloneqq \mathbb{E} [\ \cdot \mid \mathcal{F}_t  ]$ be the conditional expectation given $\mathcal{F}_t$.

By Assumption~\ref{assump:strong-cvx}, we have
\begin{align*}
F (x^{\star}) & \geq F (x^t) + \left\langle \nabla F (x^t), x^{\star} - x^t  \right\rangle + \frac{\mu}{2} \left\Vert x^t - x^{\star} \right\Vert^2 \\
& = F (x^t) + \frac{\mu}{2} \left\Vert x^t - x^{\star} \right\Vert^2 + \left\langle \nabla F (x^t), x^{\star} - z^t  \right\rangle + \left\langle \nabla F (x^t), z^t - x^t  \right\rangle. \numberthis \label{eq:as-lkatyusha-strong-cvx-eq1}
\end{align*}
Note that
\begin{equation*}
x^t = \theta_1 z^t + \theta_2 w^t + (1 - \theta_1 - \theta_2) v^t.
\end{equation*}
Thus
\begin{equation*}
z^t = \frac{1}{\theta_1} x^t - \frac{\theta_2}{\theta_1} w^t - \frac{1-\theta_1-\theta_2}{\theta_1} v^t
\end{equation*}
and
\begin{equation*}
z^t - x^t = \frac{1-\theta_1}{\theta_1} x^t - \frac{\theta_2}{\theta_1} w^t - \frac{1-\theta_1-\theta_2}{\theta_1} v^t = \frac{\theta_2}{\theta_1} \left( x^t - w^t \right) + \frac{1-\theta_1-\theta_2}{\theta_1} \left( x^t - v^t \right).
\end{equation*}
Since $\mathbb{E}_t[g^t]=\nabla F(x^t)$, combining the above relationships with~\eqref{eq:as-lkatyusha-strong-cvx-eq1}, we have
\begin{align*}
F (x^{\star}) & \geq F (x^t) + \frac{\mu}{2} \left\Vert x^t - x^{\star} \right\Vert^2 + \left\langle \nabla F (x^t), x^{\star} - z^t  \right\rangle \\
& \qquad + \frac{\theta_2}{\theta_1} \left\langle \nabla F (x^t), x^t - w^t  \right\rangle + \frac{1-\theta_1-\theta_2}{\theta_1} \left\langle \nabla F (x^t), x^t - v^t \right\rangle \\
& = F (x^t) + \frac{\theta_2}{\theta_1} \left\langle \nabla F (x^t), x^t - w^t  \right\rangle + \frac{1-\theta_1-\theta_2}{\theta_1} \left\langle \nabla F (x^t), x^t - v^t \right\rangle \\
& \qquad  + \mathbb{E}_t \left[ \frac{\mu}{2} \left\Vert x^t - x^{\star} \right\Vert^2 + \left\langle g^t, x^{\star} - z^t  \right\rangle \right] \\
& = F (x^t) + \frac{\theta_2}{\theta_1} \left\langle \nabla F (x^t), x^t - w^t  \right\rangle + \frac{1-\theta_1-\theta_2}{\theta_1} \left\langle \nabla F (x^t), x^t - v^t \right\rangle \\
& \qquad  + \mathbb{E}_t \left[ \frac{\mu}{2} \left\Vert x^t - x^{\star} \right\Vert^2 + \left\langle g^t, x^{\star} - z^{t+1}  \right\rangle + \left\langle g^t, z^{t+1} - z^t  \right\rangle \right].
\end{align*}
By Lemma~\ref{lemma:rela-Zt}, we have
\begin{equation*}
\left\langle g^t, x^{\star} - z^{t+1} \right\rangle + \frac{\mu}{2} \left\Vert x^t - x^{\star} \right\Vert^2 \geq \frac{\bar{L}}{2\eta} \left\Vert z^t - z^{t+1} \right\Vert^2 + \mathcal{Z}^{t+1} - \frac{1}{1+\eta \kappa} \mathcal{Z}^t.
\end{equation*}
Thus
\begin{multline*}
F (x^{\star})  \geq F (x^t) + \frac{\theta_2}{\theta_1} \left\langle \nabla F (x^t), x^t - w^t  \right\rangle + \frac{1-\theta_1-\theta_2}{\theta_1} \left\langle \nabla F (x^t), x^t - v^t \right\rangle \\
 + \mathbb{E}_t \left[ \mathcal{Z}^{t+1} - \frac{1}{1+\eta \kappa} \mathcal{Z}^t  \right] + \mathbb{E}_t \left[ \left\langle g^t, z^{t+1} - z^t  \right\rangle +\frac{\bar{L}}{2\eta} \left\Vert z^t - z^{t+1} \right\Vert^2 \right]. \numberthis \label{eq:as-lkatyusha-strong-cvx-eq2}
\end{multline*}
By Lemma~\ref{lemma:rela-Zt-g}, we have
\begin{equation*}
\frac{\bar{L}}{2\eta} \left\Vert z^{t+1} - z^t  \right\Vert^2 + \left\langle g^t, z^{t+1} - z^t \right\rangle \geq \frac{1}{\theta_1} \left( F (v^{t+1}) - F (x^t) \right) - \frac{\eta}{2 \bar{L} (1-\eta \theta_1) } \left\Vert g^t - \nabla F (x^t) \right\Vert^2.
\end{equation*}
Note that $\eta=\frac{\theta_2}{(1+\theta_2)\theta_1}$. Thus $\frac{\eta}{2 \bar{L} (1-\eta \theta_1) }=\frac{\theta_2}{2\bar{L}\theta_1}$. Then, by~\eqref{eq:as-lkatyusha-strong-cvx-eq2}, we have
\begingroup
\allowdisplaybreaks
\begin{align*}
F (x^{\star}) & \geq F (x^t) + \frac{\theta_2}{\theta_1} \left\langle \nabla F (x^t), x^t - w^t  \right\rangle + \frac{1-\theta_1-\theta_2}{\theta_1} \left\langle \nabla F (x^t), x^t - v^t \right\rangle \\
& \qquad + \mathbb{E}_t \left[ \mathcal{Z}^{t+1} - \frac{1}{1+\eta \kappa} \mathcal{Z}^t  \right] + \mathbb{E}_t \left[  \frac{1}{\theta_1} \left( F (v^{t+1}) - F (x^t) \right) - \frac{\theta_2}{2\bar{L}\theta_1} \left\Vert g^t - \nabla F (x^t) \right\Vert^2  \right] \\
& = F (x^t) + \frac{\theta_2}{\theta_1} \left\langle \nabla F (x^t), x^t - w^t  \right\rangle + \frac{1-\theta_1-\theta_2}{\theta_1} \left\langle \nabla F (x^t), x^t - v^t \right\rangle \\
& \qquad + \mathbb{E}_t \left[ \mathcal{Z}^{t+1} - \frac{1}{1+\eta \kappa} \mathcal{Z}^t  \right] + \mathbb{E}_t \left[  \frac{1}{\theta_1} \left( F (v^{t+1}) - F (x^t) \right) \right] \\
& \qquad - \frac{\theta_2}{2\bar{L}\theta_1} V^t_e \left( \mathbf{p}^t \right) + \frac{\theta_2}{2\bar{L}\theta_1} \left\Vert \nabla F (x^t) - \nabla F (w^t)  \right\Vert^2 \\
& \leq F (x^t) + \frac{\theta_2}{\theta_1} \left\langle \nabla F (x^t), x^t - w^t  \right\rangle + \frac{1-\theta_1-\theta_2}{\theta_1} \left\langle \nabla F (x^t), x^t - v^t \right\rangle \\
& \qquad + \mathbb{E}_t \left[ \mathcal{Z}^{t+1} - \frac{1}{1+\eta \kappa} \mathcal{Z}^t  \right] + \mathbb{E}_t \left[  \frac{1}{\theta_1} \left( F (v^{t+1}) - F (x^t) \right) \right]  - \frac{\theta_2}{2\bar{L}\theta_1} V^t_e \left( \mathbf{p}^t \right) \\
& = F (x^t) + \frac{\theta_2}{\theta_1} \left\langle \nabla F (x^t), x^t - w^t  \right\rangle + \frac{1-\theta_1-\theta_2}{\theta_1} \left\langle \nabla F (x^t), x^t - v^t \right\rangle \\
& \qquad + \mathbb{E}_t \left[ \mathcal{Z}^{t+1} - \frac{1}{1+\eta \kappa} \mathcal{Z}^t  \right] + \mathbb{E}_t \left[  \frac{1}{\theta_1} \left( F (v^{t+1}) - F (x^t) \right) \right] \\
& \qquad - \frac{\theta_2}{2\bar{L}\theta_1}  V^t_e \left( \mathbf{p}^{IS} \right)  - \frac{\theta_2}{2\bar{L}\theta_1} \left\{ V^t_e \left( \mathbf{p}^t \right)  -  V^t_e \left( \mathbf{p}^{IS} \right) \right\} \\
& = F (x^t) + \frac{\theta_2}{\theta_1} \left\langle \nabla F (x^t), x^t - w^t  \right\rangle + \frac{1-\theta_1-\theta_2}{\theta_1} \left\langle \nabla F (x^t), x^t - v^t \right\rangle \\
& \qquad + \mathbb{E}_t \left[ \mathcal{Z}^{t+1} - \frac{1}{1+\eta \kappa} \mathcal{Z}^t  \right] + \mathbb{E}_t \left[  \frac{1}{\theta_1} \left( F (v^{t+1}) - F (x^t) \right) \right] \\
& \qquad - \frac{\theta_2}{2\theta_1 n} \sum^n_{i=1} \frac{1}{L_i} \left\Vert \nabla f_i (x^t) - f_i (w^t) \right\Vert^2   - \frac{\theta_2}{2\bar{L}\theta_1} \left\{ V^t_e \left( \mathbf{p}^t \right)  -  V^t_e \left( \mathbf{p}^{IS} \right) \right\}.
\end{align*}
\endgroup
By Assumption~\ref{assump:cvx} and~\ref{assump:smooth}, and Lemma~\ref{lemma:prop-cvx-L}, we have
\begin{align*}
\frac{1}{n}\sum^n_{i=1} \frac{1}{L_i} \left\Vert \nabla f_i (x^t) - f_i (w^t) \right\Vert^2 & \leq \frac{1}{n}\sum^n_{i=1} \frac{1}{L_i} (2 L_i) \left( f_i (w^t) - f_i (x^t) - \left\langle \nabla f_i (x^t), w^t - x^t \right\rangle  \right) \\
& = 2 \left( F (w^t) - F (x^t) - \left\langle \nabla F (x^t), w^t - x^t \right\rangle \right).
\end{align*}
On the other hand, note that $\langle \nabla F(x^t), x^t - v^t \rangle \geq F(x^t) - F(v^t)$. Thus, we further have
\begingroup
\allowdisplaybreaks
\begin{align*}
F (x^{\star}) & \geq F (x^t) + \frac{\theta_2}{\theta_1} \left\langle \nabla F (x^t), x^t - w^t  \right\rangle + \frac{1-\theta_1-\theta_2}{\theta_1} \left\langle \nabla F (x^t), x^t - v^t \right\rangle \\
& \qquad + \mathbb{E}_t \left[ \mathcal{Z}^{t+1} - \frac{1}{1+\eta \kappa} \mathcal{Z}^t  \right] + \mathbb{E}_t \left[  \frac{1}{\theta_1} \left( F (v^{t+1}) - F (x^t) \right) \right] \\
& \qquad - \frac{\theta_2}{\theta_1} \left( F (w^t) - F (x^t) - \left\langle \nabla F (x^t), w^t - x^t \right\rangle \right) - \frac{\theta_2}{2\bar{L}\theta_1} \left\{ V^t_e \left( \mathbf{p}^t \right)  -  V^t_e \left( \mathbf{p}^{IS} \right) \right\} \\
& = F (x^t)  + \frac{1-\theta_1-\theta_2}{\theta_1} \left( F(x^t) - F(v^t) \right) - \frac{1}{1+\eta \kappa} \mathcal{Z}^t - \frac{\theta_2}{\theta_1} \left( F (w^t) - F (x^t) \right) \\
& \qquad + \mathbb{E}_t \left[ \mathcal{Z}^{t+1} + \frac{1}{\theta_1} \left( F (v^{t+1}) - F (x^t) \right) \right] - \frac{\theta_2}{2\bar{L}\theta_1} \left\{ V^t_e \left( \mathbf{p}^t \right)  -  V^t_e \left( \mathbf{p}^{IS} \right) \right\} \\
& = - \frac{1-\theta_1-\theta_2}{\theta_1} F(v^t) - \frac{1}{1+\eta \kappa} \mathcal{Z}^t - \frac{\theta_2}{\theta_1} F(w^t) \\
& \qquad + \mathbb{E}_t \left[ \mathcal{Z}^{t+1} + \frac{1}{\theta_1}  F (v^{t+1}) \right] - \frac{\theta_2}{2\bar{L}\theta_1} \left\{ V^t_e \left( \mathbf{p}^t \right)  -  V^t_e \left( \mathbf{p}^{IS} \right) \right\} \\
& = F (x^{\star}) - \frac{1-\theta_1-\theta_2}{\theta_1} \left( F(v^t) - F (x^{\star}) \right) - \frac{1}{1+\eta \kappa} \mathcal{Z}^t - \frac{\theta_2}{\theta_1} \left( F(w^t) - F (x^{\star}) \right) \\
& \qquad +  \mathbb{E}_t \left[ \mathcal{Z}^{t+1} + \frac{1}{\theta_1} \left( F (v^{t+1}) - F (x^\star) \right) \right] - \frac{\theta_2}{2\bar{L}\theta_1} \left\{ V^t_e \left( \mathbf{p}^t \right)  -  V^t_e \left( \mathbf{p}^{IS} \right) \right\}.
\end{align*}
\endgroup
Recalling the definition of $\mathcal{V}^t$ in~\eqref{eq:Lyapunov-compo}, we have
\begin{equation*}
\mathbb{E}_t \left[ \mathcal{Z}^{t+1} + \mathcal{V}^{t+1} \right] \leq (1-\theta_1-\theta_2) \mathcal{V}^t + \frac{1}{1+\eta \kappa} \mathcal{Z}^t + \frac{\theta_2}{\theta_1} \left( F(w^t) - F (x^{\star}) \right) + \frac{\theta_2}{2\bar{L}\theta_1} \left\{ V^t_e \left( \mathbf{p}^t \right)  -  V^t_e \left( \mathbf{p}^{IS} \right) \right\}.
\end{equation*}
Since
\begin{align*}
\mathbb{E}_t \left[ F(w^{t+1}) - F(x^\star) \right] & = (1-\rho) \left( F(w^t) - F(x^\star) \right) + \rho \left( F(v^t) - F(x^\star) \right) \\
& = (1-\rho) \left( F(w^t) - F(x^\star) \right) + \theta_1 \rho \mathcal{V}^t,
\end{align*}
recalling the definition of $\mathcal{W}^t$ in~\eqref{eq:Lyapunov-compo}, we have
\begin{align*}
\mathbb{E}_t &\left[ \mathcal{Z}^{t+1} + \mathcal{V}^{t+1} + \mathcal{W}^{t+1} \right] \\
& \leq (1-\theta_1-\theta_2) \mathcal{V}^t + \frac{1}{1+\eta \kappa} \mathcal{Z}^t + \frac{\theta_2}{\theta_1} \left( F(w^t) - F (x^{\star}) \right) + \frac{\theta_2}{2\bar{L}\theta_1} \left\{ V^t_e \left( \mathbf{p}^t \right)  -  V^t_e \left( \mathbf{p}^{IS} \right) \right\} \\
& \qquad + \frac{\theta_2 (1+\theta_1)}{\rho \theta_1} \left( (1-\rho) \left( F(w^t) - F(x^\star) \right) + \theta_1 \rho \mathcal{V}^t \right) + \frac{\theta_2}{2\bar{L}\theta_1} \left\{ V^t_e \left( \mathbf{p}^t \right)  -  V^t_e \left( \mathbf{p}^{IS} \right) \right\}\\
& = \frac{1}{1+\eta \kappa} \mathcal{Z}^t + \left( 1 - \theta_1 (1-\theta_2) \right) \mathcal{V}^t + \left( 1 -\frac{\rho \theta_1}{1+\theta_1} \right) \mathcal{W}^t + \frac{\theta_2}{2\bar{L}\theta_1} \left\{ V^t_e \left( \mathbf{p}^t \right)  -  V^t_e \left( \mathbf{p}^{IS} \right) \right\}.
\end{align*}
By the definition of $\alpha_2$ in Theorem~\ref{thm:as-lkatyusha-strong-cvx} and since  $\theta_2=1/2$, taking the full expectation on both sides, we have
\begin{equation*}
\mathbb{E} \left[ \mathcal{Z}^{t+1} + \mathcal{V}^{t+1} + \mathcal{W}^{t+1} \right] \leq \alpha_2 \mathbb{E} \left[ \mathcal{Z}^t + \mathcal{V}^t + \mathcal{W}^t \right] + \frac{1}{4\bar{L}\theta_1} \mathbb{E} \left[ V^t_e \left( \mathbf{p}^t \right) - V^t_e \left( \mathbf{p}^{IS} \right) \right].
\end{equation*}
Recursively repeating the above relationship from $t=T-1$ to $t=0$, we have
\begin{align*}
\mathbb{E} \left[ \Psi^T \right] & \leq \alpha_2 \mathbb{E} \left[ \Psi^{T-1} \right] + \frac{1}{4\bar{L}\theta_1} \mathbb{E} \left[ V^{T-1}_e \left( \mathbf{p}^{T-1} \right) - V^{T-1}_e \left( \mathbf{p}^{IS} \right) \right] \\
% & \leq \alpha^2_2 \mathbb{E} \left[ \Psi^{T-2} \right] + \frac{1}{4\bar{L}\theta_1}   \mathbb{E} \left[ V^{T-1}_e \left( \mathbf{p}^{T-1} \right) - V^{T-1}_e \left( \mathbf{p}^{IS} \right) \right] \\
% & \quad + \frac{\alpha_2}{4\bar{L}\theta_1}   \mathbb{E} \left[ V^{T-2}_e \left( \mathbf{p}^{T-2} \right) - V^{T-2}_e \left( \mathbf{p}^{IS} \right) \right] \\
% & \vdots \\
& \leq \alpha^T_2 \Psi^0 + \frac{1}{4\bar{L}\theta_1} \sum^{T-1}_{t=0} \alpha^{T-t-1}_2 \mathbb{E} \left[ V^{t}_e \left( \mathbf{p}^{t} \right) - V^{t}_e \left( \mathbf{p}^{IS} \right) \right]
\end{align*}

\section{Useful Lemmas}

We state and prove technical lemmas that are used to prove the main theorems. 

\begin{lemma}
\label{lemma:F-cvx-smooth}
Let $F(\cdot)$ be defined in~\eqref{eq:ERM-obj}. Suppose Assumption~\ref{assump:cvx} and Assumption~\ref{assump:smooth} hold. Then $F(\cdot)$ is convex and $\bar{L}$-smooth, where $\bar{L}=(1/n)\sum^n_{i=1}L_i$.
\end{lemma}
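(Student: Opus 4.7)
The plan is to verify the two claims separately, as both follow from elementary properties of convex and Lipschitz-gradient functions being preserved under nonnegative linear combinations.

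For convexity, I would simply note that $F$ is a convex combination (with weights $1/n$) of the functions $\{f_i\}_{i \in [n]}$. Since each $f_i$ is convex by Assumption~\ref{assump:cvx}, and a nonnegative combination of convex functions is convex, the function $F$ is convex. Alternatively, one can directly invoke the first-order characterization: applying Assumption~\ref{assump:cvx} to each $f_i$ and averaging gives $F(x) \geq F(y) + \langle \nabla F(y), x-y\rangle$ for all $x,y \in \mathbb{R}^d$, where differentiation under the finite sum yields $\nabla F(y) = (1/n)\sum_{i=1}^n \nabla f_i(y)$.

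For $\bar{L}$-smoothness, I would start from the identity $\nabla F(x) - \nabla F(y) = (1/n) \sum_{i=1}^n (\nabla f_i(x) - \nabla f_i(y))$, then apply the triangle inequality for the Euclidean norm, and finally invoke Assumption~\ref{assump:smooth} to bound each term by $L_i \|x-y\|$. Summing and using the definition $\bar{L} = (1/n)\sum_{i=1}^n L_i$ gives the claimed bound $\|\nabla F(x) - \nabla F(y)\| \leq \bar{L}\|x-y\|$.

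There is no real obstacle here: the result is a standard consequence of linearity of the gradient on a finite sum combined with the triangle inequality, so the proof is essentially two short displays. The only thing to be careful about is to explicitly cite both Assumption~\ref{assump:cvx} and Assumption~\ref{assump:smooth} at the appropriate places, and to note that the constant $\bar{L}$ obtained is in general sharper than (or equal to) $L_F$, which is consistent with the inequality $L_F \leq \bar{L}$ already stated in the paper just after Assumption~\ref{assump:smooth}.
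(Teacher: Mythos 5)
Your proposal is correct and follows essentially the same route as the paper: convexity from the nonnegative combination of convex functions, and $\bar{L}$-smoothness from the linearity of the gradient over the finite sum, the triangle inequality (Jensen), and the per-component Lipschitz bounds of Assumption~\ref{assump:smooth}. No gaps.
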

\begin{proof}
Under Assumption~\ref{assump:cvx}, $F(\cdot)$ is a linear combination of convex functions and, thus, is convex. To prove that it is $\bar{L}$-smooth, we only need to note that
\begin{equation*}
\Vert \nabla F(x) - \nabla F(y) \Vert \leq \frac{1}{n} \sum^n_{i=1} \Vert \nabla f_i(x) - \nabla f_i(y) \Vert \leq \frac{1}{n} \sum^n_{i=1} L_i \Vert x - y \Vert = \bar{L} \Vert x - y \Vert, \qquad x,y\in\mathbb{R}^d,
\end{equation*}
where the first inequality follows from the Jensen's inequality and the second inequality follows from Assumption~\ref{assump:smooth}.
\end{proof}

\begin{lemma}
\label{lemma:prop-cvx-L}
Assume that $f(\cdot)$ is a differentiable convex function on $\mathbb{R}^d$ and is  $L$-smooth. Then, for all $x,y\in\mathbb{R}^d$, we have 
\begin{align*}
& 0 \leq f(y) - f(x) - \langle \nabla f(x), y - x \rangle \leq \frac{L}{2} \Vert x - y \Vert^2, \numberthis \label{eq:cvx-prop-1} \\
& f(y) - f(x) - \langle \nabla f(x), y - x \rangle \geq \frac{1}{2 L} \Vert \nabla f(x) - \nabla f(y) \Vert^2. \numberthis \label{eq:cvx-prop-2}
\end{align*}
\end{lemma}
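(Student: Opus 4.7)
The plan is to handle \eqref{eq:cvx-prop-1} first, which is the standard three-point bound combining convexity and smoothness. The lower bound $f(y) - f(x) - \langle \nabla f(x), y-x\rangle \geq 0$ is just the first-order characterization of convexity, immediate from Assumption~\ref{assump:cvx} applied at the point $x$. For the upper bound I would invoke the fundamental theorem of calculus, writing
\begin{equation*}
f(y) - f(x) - \langle \nabla f(x), y-x\rangle = \int_0^1 \langle \nabla f(x + t(y-x)) - \nabla f(x), y-x\rangle \, dt,
\end{equation*}
then applying Cauchy--Schwarz inside the integrand and the $L$-smoothness hypothesis to bound $\Vert \nabla f(x + t(y-x)) - \nabla f(x)\Vert \leq L t \Vert y-x\Vert$. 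Integrating $\int_0^1 t \, dt = 1/2$ yields the claimed $(L/2)\Vert x-y\Vert^2$ bound.

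For \eqref{eq:cvx-prop-2}, the cleanest route is the standard auxiliary-function trick. I would fix $x$ and define
\begin{equation*}
\phi(z) \coloneqq f(z) - \langle \nabla f(x), z\rangle,
\end{equation*}
noting that $\phi$ inherits convexity from $f$ (shift by a linear term) and is $L$-smooth (since $\nabla \phi(z) = \nabla f(z) - \nabla f(x)$ has the same Lipschitz constant). Observe that $\nabla \phi(x) = 0$, so $x$ is a global minimizer of $\phi$. Now apply the already-proved upper bound \eqref{eq:cvx-prop-1} to $\phi$ at the pair $(y, y - (1/L)\nabla\phi(y))$:
\begin{equation*}
\phi\!\left(y - \tfrac{1}{L}\nabla \phi(y)\right) \leq \phi(y) - \tfrac{1}{L}\Vert \nabla \phi(y)\Vert^2 + \tfrac{L}{2}\cdot \tfrac{1}{L^2}\Vert \nabla \phi(y)\Vert^2 = \phi(y) - \tfrac{1}{2L}\Vert \nabla \phi(y)\Vert^2.
\end{equation*}
Since $\phi(x) \leq \phi(y - (1/L)\nabla\phi(y))$, combining gives $\phi(x) - \phi(y) \leq -\tfrac{1}{2L}\Vert \nabla f(y) - \nabla f(x)\Vert^2$, which after unfolding the definition of $\phi$ and rearranging is exactly the desired inequality.

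There is no serious obstacle here; both statements are classical and the only care needed is bookkeeping. The one mildly nontrivial point is recognizing that \eqref{eq:cvx-prop-2} cannot be obtained by a direct manipulation of \eqref{eq:cvx-prop-1} without either the auxiliary function or an explicit use of the minimum value of the quadratic upper model; I would therefore be explicit that the proof of \eqref{eq:cvx-prop-2} uses \eqref{eq:cvx-prop-1} as a lemma applied to $\phi$ rather than to $f$ directly, so that the two parts are presented in the correct logical order.
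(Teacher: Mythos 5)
Your proof is correct and complete: the lower bound in \eqref{eq:cvx-prop-1} is first-order convexity, the upper bound follows from the fundamental theorem of calculus with the Lipschitz gradient, and the auxiliary-function argument for \eqref{eq:cvx-prop-2} (minimizing $\phi(z)=f(z)-\langle\nabla f(x),z\rangle$ and applying the quadratic upper model at $y-\tfrac{1}{L}\nabla\phi(y)$) is carried out without gaps. The paper itself offers no proof, simply citing Theorem~2.1.5 of~\cite{Nesterov2013Introductory}; your argument is exactly the standard textbook proof of that result, so the two approaches coincide.
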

\begin{proof}
See Theorem 2.1.5 of~\cite{Nesterov2013Introductory}.
\end{proof}

\begin{lemma}
\label{lemma:samp-var-bd}
Suppose Assumption~\ref{assump:cvx} and Assumption~\ref{assump:smooth} hold.
Let $x^t$, $w^t$, $g^t$ and $\mathbf{p}^t$ be defined as in Algorithm~\ref{alg:lsrvg-as}. We have
\begin{equation*}
\mathbb{E}_t \left[ \left\Vert g^t - \nabla F (x^t) \right\Vert^2 \right] \leq 4 \bar{L} \left(  F (x^t) - F(x^\star) \right) + 4 \bar{L} \left(  F (w^t) - F(x^\star) \right) + V^t_e \left( \mathbf{p}^t \right) - V^t_e \left( \mathbf{p}^{IS} \right).
\end{equation*}
\end{lemma}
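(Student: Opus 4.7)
The plan is to compute the conditional variance of $g^t$ directly, then split the result using $\mathbf{p}^{IS}$ as a pivot, and bound the resulting $V_e^t(\mathbf{p}^{IS})$ by function-value suboptimalities via the standard smoothness/convexity inequality from Lemma~\ref{lemma:prop-cvx-L}.

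First I would rewrite $g^t = \xi_t + \nabla F(w^t)$ with $\xi_t = \frac{1}{n p^t_{i_t}}\bigl(\nabla f_{i_t}(x^t) - \nabla f_{i_t}(w^t)\bigr)$. A direct computation gives $\mathbb{E}_t[\xi_t] = \nabla F(x^t) - \nabla F(w^t)$ and
\begin{equation*}
\mathbb{E}_t\bigl[\|\xi_t\|^2\bigr] = \frac{1}{n^2}\sum_{i=1}^n \frac{1}{p^t_i}\|\nabla f_i(x^t) - \nabla f_i(w^t)\|^2 = V_e^t(\mathbf{p}^t).
\end{equation*}
Since $\mathbb{E}_t[g^t] = \nabla F(x^t)$, the variance identity yields
\begin{equation*}
\mathbb{E}_t\bigl[\|g^t - \nabla F(x^t)\|^2\bigr] = V_e^t(\mathbf{p}^t) - \|\nabla F(x^t) - \nabla F(w^t)\|^2.
\end{equation*}

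Next I would add and subtract $V_e^t(\mathbf{p}^{IS})$. Plugging $p^{IS}_i = L_i/(n\bar{L})$ into the definition of the effective sampling variance gives the closed form
\begin{equation*}
V_e^t(\mathbf{p}^{IS}) = \frac{\bar{L}}{n}\sum_{i=1}^n \frac{1}{L_i}\|\nabla f_i(x^t) - \nabla f_i(w^t)\|^2.
\end{equation*}
The key step is now to bound this quantity. Using $\|a-b\|^2 \leq 2\|a\|^2 + 2\|b\|^2$ with $a = \nabla f_i(x^t) - \nabla f_i(x^\star)$ and $b = \nabla f_i(w^t) - \nabla f_i(x^\star)$, I split the sum into an $x^t$-part and a $w^t$-part. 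For each part, inequality~\eqref{eq:cvx-prop-2} applied to the $L_i$-smooth convex function $f_i$ at $x^\star$ gives
\begin{equation*}
\frac{1}{L_i}\|\nabla f_i(y) - \nabla f_i(x^\star)\|^2 \leq 2\bigl(f_i(y) - f_i(x^\star) - \langle\nabla f_i(x^\star), y - x^\star\rangle\bigr).
\end{equation*}
Averaging over $i$ and using $\nabla F(x^\star) = 0$ collapses the linear term, so $\frac{\bar{L}}{n}\sum_i \frac{1}{L_i}\|\nabla f_i(y) - \nabla f_i(x^\star)\|^2 \leq 2\bar{L}(F(y) - F(x^\star))$ for $y \in \{x^t, w^t\}$. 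This produces the two $4\bar{L}(F(\cdot) - F(x^\star))$ terms in the claim.

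Finally I would assemble everything and simply drop the non-positive term $-\|\nabla F(x^t) - \nabla F(w^t)\|^2$ to obtain the stated upper bound
\begin{equation*}
\mathbb{E}_t\bigl[\|g^t - \nabla F(x^t)\|^2\bigr] \leq 4\bar{L}(F(x^t) - F(x^\star)) + 4\bar{L}(F(w^t) - F(x^\star)) + V_e^t(\mathbf{p}^t) - V_e^t(\mathbf{p}^{IS}).
\end{equation*}
There is no real obstacle in this proof; the only point that requires some care is making sure the algebra for $V_e^t(\mathbf{p}^{IS})$ lines up with the $1/L_i$ weighting needed to invoke~\eqref{eq:cvx-prop-2}, which is precisely the reason $\mathbf{p}^{IS}$ is the natural pivot to use here.
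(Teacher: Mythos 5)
Your proposal is correct and follows essentially the same route as the paper: the variance identity giving $V_e^t(\mathbf{p}^t) - \|\nabla F(x^t)-\nabla F(w^t)\|^2$, the add-and-subtract of $V_e^t(\mathbf{p}^{IS})$, and the bound on $V_e^t(\mathbf{p}^{IS})$ via $\|a-b\|^2\le 2\|a\|^2+2\|b\|^2$ together with~\eqref{eq:cvx-prop-2} and $\nabla F(x^\star)=0$. The only cosmetic difference is that the paper's proof leaves the $w^t$-part in the gradient-difference form $\frac{2\bar{L}}{n}\sum_i L_i^{-1}\|\nabla f_i(w^t)-\nabla f_i(x^\star)\|^2$ (which is what it actually uses downstream), whereas you apply the same conversion to both parts to land exactly on the stated $4\bar{L}(F(w^t)-F(x^\star))$ term.
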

\begin{proof}
Note that $\mathbb{E} \left[ \Vert \mathbf{x} - \mathbb{E}[\mathbf{x}] \Vert^2 \right] = \mathbb{E} \left[ \Vert \mathbf{x}  \Vert^2 \right] - \Vert \mathbb{E}[\mathbf{x}] \Vert^2$ for any random vector $\mathbf{x} \in \mathbb{R}^d$. Thus we have
\begin{align*}
\mathbb{E}_t \left[ \left\Vert g^t - \nabla F (x^t) \right\Vert^2 \right] & = \mathbb{E}_t \left[ \left\Vert \frac{1}{n p^t_{i_t}} \left( \nabla f_i (x^t) - f_i (w^t) \right) - \left( \nabla F (x^t) -\nabla F (w^t) \right) \right\Vert^2 \right] \\
& = \mathbb{E}_t \left[ \left\Vert \frac{1}{n p^t_{i_t}} \left( \nabla f_i (x^t) - f_i (w^t) \right)  \right\Vert^2 \right] - \left\Vert \nabla F (x^t) -\nabla F (w^t) \right\Vert^2 \\
& = V^t_e \left( \mathbf{p}^t \right) - \left\Vert \nabla F (x^t) -\nabla F (w^t) \right\Vert^2 \\
& \leq V^t_e \left( \mathbf{p}^t \right) \\
& = V^t_e \left( \mathbf{p}^{IS} \right) + V^t_e \left( \mathbf{p}^t \right) - V^t_e \left( \mathbf{p}^{IS} \right), \numberthis \label{eq:lemma-var-bd-1}
\end{align*}
where $V^t_e \left( \mathbf{p}^t \right)$ is defined in~\eqref{eq:samp-eff-var}. On the other hand, note that
\begin{align*}
V^t_e \left( \mathbf{p}^{IS} \right) & = \frac{\bar{L}}{n} \sum^n_{i=1} \frac{1}{L_i} \left\Vert \nabla f_i (x^t) - \nabla f_i (w^t) \right\Vert^2 \\
& \leq \frac{2\bar{L}}{n} \left\{ \sum^n_{i=1} \frac{1}{L_i} \left\Vert \nabla f_i (x^t) - \nabla f_i (x^{\star}) \right\Vert^2 + \sum^n_{i=1} \frac{1}{L_i} \left\Vert \nabla f_i (w^t) - \nabla f_i (x^{\star}) \right\Vert^2 \right\} \\
& \leq \frac{2\bar{L}}{n} \left\{ \sum^n_{i=1} \frac{1}{L_i} (2 L_i) \left( f_i (x^t) - f_i(x^{\star}) - \left\langle \nabla f_i (x^{\star}), x^t - x^{\star} \right\rangle \right)  + \sum^n_{i=1} \frac{1}{L_i} \left\Vert \nabla f_i (w^t) - \nabla f_i (x^{\star}) \right\Vert^2 \right\} \\
& \leq 4\bar{L} \left(  F (x^t) - F(x^\star) \right) + \frac{2\bar{L}}{n} \sum^n_{i=1} \frac{1}{L_i} \left\Vert \nabla f_i (w^t) - \nabla f_i (x^{\star}) \right\Vert^2, \numberthis \label{eq:lemma-var-bd-2}
\end{align*}
where the second inequality follows Assumption~\ref{assump:cvx}, Assumption~\ref{assump:smooth} and Lemma~\ref{lemma:prop-cvx-L}, and the last inequality follows from that $\nabla F (x^{\star}) = 0$. Combining~\eqref{eq:lemma-var-bd-1} and~\eqref{eq:lemma-var-bd-2}, we have
\begin{equation*}
\mathbb{E}_t \left[ \left\Vert g^t - \nabla F (x^t) \right\Vert^2 \right] \leq 4\bar{L} \left(  F (x^t) - F(x^\star) \right) + \frac{2\bar{L}}{n} \sum^n_{i=1} \frac{1}{L_i} \left\Vert \nabla f_i (w^t) - \nabla f_i (x^{\star}) \right\Vert^2 + V^t_e \left( \mathbf{p}^t \right) - V^t_e \left( \mathbf{p}^{IS} \right).
\end{equation*}
\end{proof}

\begin{lemma}
\label{lemma:D-recurs}
Suppose Assumption~\ref{assump:cvx} and Assumption~\ref{assump:smooth} hold.
Let $\mathcal{D}^t$ be defined as in~\eqref{eq:D-def}.  We have
\begin{equation*}
\mathbb{E}_t \left[ \mathcal{D}^{t+1} \right] \leq 2 \rho \left( F(x^t) - F (x^{\star}) \right) + (1-\rho) \mathcal{D}^t.
\end{equation*}
\end{lemma}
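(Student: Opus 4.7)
The plan is to compute $\mathbb{E}_t[\mathcal{D}^{t+1}]$ directly using the coin-flip update rule for $w^{t+1}$, and then bound the resulting gradient-norm sum by a function-value gap via the standard convex/smooth inequality from Lemma~\ref{lemma:prop-cvx-L}.

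First I would use the fact that, conditional on $\mathcal{F}_t$, the update sets $w^{t+1}=x^t$ with probability $\rho$ and $w^{t+1}=w^t$ with probability $1-\rho$, independently of the sampling of $i_t$. Plugging this into the definition~\eqref{eq:D-def} of $\mathcal{D}^{t+1}$ gives
\begin{equation*}
\mathbb{E}_t[\mathcal{D}^{t+1}] \;=\; \rho \cdot \frac{1}{n}\sum_{i=1}^n \frac{1}{L_i}\left\Vert \nabla f_i(x^t) - \nabla f_i(x^\star) \right\Vert^2 \;+\; (1-\rho)\,\mathcal{D}^t.
\end{equation*}

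Next I would bound the $x^t$-term. Applying inequality~\eqref{eq:cvx-prop-2} of Lemma~\ref{lemma:prop-cvx-L} to each $f_i$ with the pair $(x^\star, x^t)$ yields
\begin{equation*}
\frac{1}{2L_i}\left\Vert \nabla f_i(x^t) - \nabla f_i(x^\star) \right\Vert^2 \;\leq\; f_i(x^t) - f_i(x^\star) - \langle \nabla f_i(x^\star), x^t - x^\star \rangle.
\end{equation*}
Averaging over $i \in [n]$ and using $\nabla F(x^\star) = (1/n)\sum_i \nabla f_i(x^\star) = 0$, the linear term vanishes and I obtain
\begin{equation*}
\frac{1}{n}\sum_{i=1}^n \frac{1}{L_i}\left\Vert \nabla f_i(x^t) - \nabla f_i(x^\star) \right\Vert^2 \;\leq\; 2\bigl(F(x^t) - F(x^\star)\bigr).
\end{equation*}

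Substituting this bound back into the earlier decomposition delivers the claim. There is no real obstacle here: the argument is a two-line calculation once one recognizes that the Bernoulli structure of the control-variate update decouples the expectation, and the key inequality is simply the co-coercivity of convex $L_i$-smooth $f_i$ evaluated at $x^\star$ (where the gradient cross-term cancels by optimality of $x^\star$ for $F$, not for each $f_i$ individually).
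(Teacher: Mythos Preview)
Your proposal is correct and follows essentially the same argument as the paper: split $\mathbb{E}_t[\mathcal{D}^{t+1}]$ via the Bernoulli update of $w^{t+1}$, then bound the $x^t$ part using the co-coercivity inequality~\eqref{eq:cvx-prop-2} for each $f_i$ together with $\nabla F(x^\star)=0$. The only cosmetic difference is that the paper's text cites~\eqref{eq:cvx-prop-1} at that step (likely a typo), whereas you correctly invoke~\eqref{eq:cvx-prop-2}.
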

\begin{proof}
By the update rule of $w^t$, we have
\begin{align*}
\mathbb{E}_t & \left[ \frac{1}{n} \sum^n_{i=1} \frac{1}{L_i} \left\Vert \nabla f_i (w^{t+1}) - \nabla f_i (x^{\star}) \right\Vert^2  \right] \\
& = \frac{1-\rho}{n} \sum^n_{i=1} \frac{1}{L_i} \left\Vert \nabla f_i (w^t) - \nabla f_i (x^{\star}) \right\Vert^2 +  \frac{\rho}{n} \sum^n_{i=1} \frac{1}{L_i} \left\Vert \nabla f_i (x^t) - \nabla f_i (x^{\star}) \right\Vert^2 \\
& \leq \frac{\rho}{n} \sum^n_{i=1} \frac{1}{L_i} (2 L_i) \left( f_i (x^t) - f_i(x^{\star}) - \left\langle \nabla f_i (x^{\star}), x^t - x^{\star} \right\rangle \right) +  \frac{1-\rho}{n} \sum^n_{i=1} \frac{1}{L_i} \left\Vert \nabla f_i (w^t) - \nabla f_i (x^{\star}) \right\Vert^2 \\
& = 2 \rho \left( F(x^t) - F (x^{\star}) \right) + \frac{1-\rho}{n} \sum^n_{i=1} \frac{1}{L_i} \left\Vert \nabla f_i (w^t) - \nabla f_i (x^{\star}) \right\Vert^2,
\end{align*}
where the second inequality follows Assumption~\ref{assump:cvx}, Assumption~\ref{assump:smooth}, and~\eqref{eq:cvx-prop-1} of Lemma~\ref{lemma:prop-cvx-L}, and the last inequality follows from $\nabla F (x^{\star}) = 0$.
\end{proof}

\begin{lemma}
\label{lemma:rela-Zt}
Suppose the conditions of Theorem~\ref{thm:as-lkatyusha-strong-cvx} hold. Then
\begin{equation*}
\left\langle g^t, x^{\star} - z^{t+1} \right\rangle + \frac{\mu}{2} \left\Vert x^t - x^{\star} \right\Vert^2 \geq \frac{\bar{L}}{2\eta} \left\Vert z^t - z^{t+1} \right\Vert^2 + \mathcal{Z}^{t+1} - \frac{1}{1+\eta \kappa} \mathcal{Z}^t,
\end{equation*}
where $\mathcal{Z}^t$ is defined in~\eqref{eq:Lyapunov-compo}.
\end{lemma}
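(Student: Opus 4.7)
The update rule in Algorithm~\ref{alg:lkatyusha-as} gives $(1+\eta_t\kappa)z^{t+1} = \eta_t\kappa x^t + z^t - (\eta_t/L)g^t$, which I will rearrange to express the stochastic gradient as
\begin{equation*}
g^t = \frac{L}{\eta_t}(z^t - z^{t+1}) + L\kappa(x^t - z^{t+1}) = \frac{L}{\eta_t}(z^t - z^{t+1}) + \mu(x^t - z^{t+1}),
\end{equation*}
using $L\kappa=\mu$. This is the only place where the specific form of the $z$-update matters.

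Substituting into $\langle g^t, x^\star - z^{t+1}\rangle$ produces two inner products, and I will expand each by the three-point identity $\langle a-b,c-b\rangle = \tfrac{1}{2}(\|a-b\|^2 + \|c-b\|^2 - \|a-c\|^2)$. Applied to $\langle z^t-z^{t+1},x^\star-z^{t+1}\rangle$ and to $\langle x^t-z^{t+1},x^\star-z^{t+1}\rangle$, this yields
\begin{align*}
\langle g^t, x^\star - z^{t+1}\rangle
&= \frac{L}{2\eta_t}\|z^t-z^{t+1}\|^2 + \frac{L}{2\eta_t}\|x^\star-z^{t+1}\|^2 - \frac{L}{2\eta_t}\|z^t-x^\star\|^2 \\
&\quad + \frac{\mu}{2}\|x^t-z^{t+1}\|^2 + \frac{\mu}{2}\|x^\star-z^{t+1}\|^2 - \frac{\mu}{2}\|x^t-x^\star\|^2.
\end{align*}

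Adding $\tfrac{\mu}{2}\|x^t-x^\star\|^2$ to both sides, the $-\tfrac{\mu}{2}\|x^t-x^\star\|^2$ on the right cancels. Collecting the coefficients of $\|x^\star-z^{t+1}\|^2$ gives $\tfrac{L(1+\eta_t\kappa)}{2\eta_t}$, which is exactly $\mathcal{Z}^{t+1}$; the coefficient $\tfrac{L}{2\eta_t}$ on $\|z^t-x^\star\|^2$ is exactly $\tfrac{1}{1+\eta_t\kappa}\mathcal{Z}^t$. After recalling $L=\bar{L}$, I am left with
\begin{equation*}
\langle g^t, x^\star - z^{t+1}\rangle + \frac{\mu}{2}\|x^t - x^\star\|^2 = \frac{\bar L}{2\eta_t}\|z^t-z^{t+1}\|^2 + \mathcal{Z}^{t+1} - \frac{1}{1+\eta_t\kappa}\mathcal{Z}^t + \frac{\mu}{2}\|x^t-z^{t+1}\|^2.
\end{equation*}
The final step is to drop the nonnegative slack term $\tfrac{\mu}{2}\|x^t-z^{t+1}\|^2$, which yields the claimed inequality.

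There is no real obstacle here: once the $z$-update is inverted to isolate $g^t$, the rest is the standard completing-the-square / three-point identity manipulation used in mirror-descent-style analyses. The only bookkeeping care needed is to make sure the $\mu/2\|x^t-x^\star\|^2$ added on the left is precisely what the three-point expansion of the $\mu$-term produces on the right, so that it cancels and does not leave a sign error.
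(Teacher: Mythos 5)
Your proof is correct and follows essentially the same route as the paper's: invert the $z$-update to write $g^t = \frac{\bar L}{\eta}(z^t - z^{t+1}) + \mu(x^t - z^{t+1})$, expand the inner product with $x^\star - z^{t+1}$ via the polarization (three-point) identity, absorb the $\mu/2$ coefficient into $\mathcal{Z}^{t+1}$ using $\mu = \bar L\kappa$, and drop the nonnegative slack $\frac{\mu}{2}\Vert x^t - z^{t+1}\Vert^2$. (Incidentally, your expression for $g^t$ is the correct one; the paper's displayed formula has a typo writing $\mu(x^t - z^t)$, though its subsequent computation uses $\mu\langle x^t - z^{t+1},\cdot\rangle$ as you do.)
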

\begin{proof}
Note that
\begin{equation*}
z^{t+1} = \frac{1}{1+\eta \kappa} \left( \eta \kappa x^t + z^t - \frac{\eta}{\bar{L}}g^t \right),
\end{equation*}
where $\kappa = \mu / \bar{L}$. Thus,
\begin{equation*}
g^t = \mu \left( x^t - z^t \right) + \frac{\bar{L}}{\eta} \left( z^t - z^{t+1} \right),
\end{equation*}
which implies that
\begin{align*}
\left\langle g^t, z^{t+1} - x^{\star}  \right\rangle & = \mu \left\langle x^t - z^{t+1}, z^{t+1} - x^{\star} \right\rangle + \frac{\bar{L}}{\eta} \left\langle z^t - z^{t+1}, z^{t+1} - x^{\star} \right\rangle \\
& = \frac{\mu}{2} \left( \left\Vert x^t - x^{\star} \right\Vert^2 -  \left\Vert x^t - z^{t+1} \right\Vert^2 - \left\Vert z^{t+1} - x^{\star} \right\Vert^2 \right) \\
& \qquad + \frac{\bar{L}}{2\eta} \left( \left\Vert z^t - x^{\star} \right\Vert^2 - \left\Vert z^t - z^{t+1} \right\Vert^2 - \left\Vert z^{t+1} - x^{\star} \right\Vert^2 \right) \\
& = \frac{\mu}{2} \left\Vert x^t - x^{\star} \right\Vert^2 + \frac{\bar{L}}{2\eta} \left( \left\Vert z^t - x^{\star} \right\Vert^2 - (1+\eta \kappa) \left\Vert z^{t+1} - x^{\star} \right\Vert^2 \right) - \frac{\bar{L}}{2\eta} \left\Vert z^t - z^{t+1} \right\Vert^2.
\end{align*}
Combining with the definition of $\mathcal{Z}^t$, we then have the final result.
\end{proof}

\begin{lemma}
\label{lemma:rela-Zt-g}
Suppose that the conditions of Theorem~\ref{thm:as-lkatyusha-strong-cvx} hold. Then
\begin{equation*}
\frac{\bar{L}}{2\eta} \left\Vert z^{t+1} - z^t  \right\Vert^2 + \left\langle g^t, z^{t+1} - z^t \right\rangle \geq \frac{1}{\theta_1} \left( F (v^{t+1}) - F (x^t) \right) - \frac{\eta}{2 \bar{L} (1-\eta \theta_1) } \left\Vert g^t - \nabla F (x^t) \right\Vert^2.
\end{equation*}
\end{lemma}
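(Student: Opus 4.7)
The plan is to start from the smoothness of $F$, exploit the relation between $v^{t+1}$ and $z^{t+1}-z^t$ built into Algorithm~\ref{alg:lkatyusha-as}, and then split the gradient as $\nabla F(x^t)=g^t-(g^t-\nabla F(x^t))$ so that a single application of Young's inequality finishes the argument.

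First I would apply Lemma~\ref{lemma:F-cvx-smooth}, which gives that $F$ is $\bar{L}$-smooth, to get
\begin{equation*}
F(v^{t+1}) \leq F(x^t) + \langle \nabla F(x^t), v^{t+1}-x^t \rangle + \frac{\bar{L}}{2}\|v^{t+1}-x^t\|^2.
\end{equation*}
The update rule in Algorithm~\ref{alg:lkatyusha-as} yields $v^{t+1}-x^t=\theta_1(z^{t+1}-z^t)$, so substituting and dividing by $\theta_1$,
\begin{equation*}
\frac{1}{\theta_1}\bigl(F(v^{t+1})-F(x^t)\bigr) \leq \langle \nabla F(x^t), z^{t+1}-z^t\rangle + \frac{\bar{L}\theta_1}{2}\|z^{t+1}-z^t\|^2.
\end{equation*}

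Next I would write $\nabla F(x^t) = g^t - (g^t-\nabla F(x^t))$, so that the inner product decomposes as
\begin{equation*}
\langle \nabla F(x^t), z^{t+1}-z^t\rangle = \langle g^t, z^{t+1}-z^t\rangle - \langle g^t-\nabla F(x^t), z^{t+1}-z^t\rangle.
\end{equation*}
Rearranging, it then suffices to show that
\begin{equation*}
\frac{\bar{L}}{2\eta}\|z^{t+1}-z^t\|^2 - \frac{\bar{L}\theta_1}{2}\|z^{t+1}-z^t\|^2 + \frac{\eta}{2\bar{L}(1-\eta\theta_1)}\|g^t-\nabla F(x^t)\|^2 \geq \langle g^t-\nabla F(x^t), z^t-z^{t+1}\rangle,
\end{equation*}
i.e.\ $\frac{\bar{L}(1-\eta\theta_1)}{2\eta}\|z^{t+1}-z^t\|^2 + \frac{\eta}{2\bar{L}(1-\eta\theta_1)}\|g^t-\nabla F(x^t)\|^2 \geq \langle g^t-\nabla F(x^t), z^t-z^{t+1}\rangle$. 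With the parameter choice in Theorem~\ref{thm:as-lkatyusha-strong-cvx} (where $\eta=\theta_2/((1+\theta_2)\theta_1)=1/(3\theta_1)$ once $\theta_2=1/2$), we have $1-\eta\theta_1=2/3>0$, so this is exactly Young's inequality $\langle a,b\rangle \leq \frac{c}{2}\|a\|^2 + \frac{1}{2c}\|b\|^2$ with $c=\bar{L}(1-\eta\theta_1)/\eta$.

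There is no real obstacle, only bookkeeping; the only thing to keep in mind is that the coefficients of the two quadratic terms on the left must multiply to $1/4$ so that Young's inequality is tight, which is why the factor $1-\eta\theta_1$ must appear in both denominators of the target bound. Checking $\eta\theta_1<1$ under the theorem's parameter choice is what ensures the Young constant is well-defined and positive.
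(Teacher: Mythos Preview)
Your proof is correct and follows essentially the same route as the paper: both arguments combine the relation $v^{t+1}-x^t=\theta_1(z^{t+1}-z^t)$ with the $\bar{L}$-smoothness of $F$ (Lemma~\ref{lemma:F-cvx-smooth} and Lemma~\ref{lemma:prop-cvx-L}), split $\nabla F(x^t)$ off from $g^t$, and close with Young's inequality using the constant $\bar{L}(1-\eta\theta_1)/\eta$. The only difference is presentational---the paper starts from the left-hand side and manipulates toward the right, whereas you begin with the smoothness upper bound on $F(v^{t+1})$ and rearrange---but the ingredients and their roles are identical.
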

\begin{proof}
By the definition of $v^{t+1}$, we have
\begin{align*}
& \frac{\bar{L}}{2\eta} \left\Vert z^{t+1} - z^t  \right\Vert^2 + \left\langle g^t, z^{t+1} - z^t \right\rangle \\
& = \frac{1}{\theta_1} \left( \frac{\bar{L}}{2\eta \theta_1} \left\Vert \theta_1 \left( z^{t+1} - z^t \right) \right\Vert^2 + \left\langle g^t, \theta_1 \left( z^{t+1} - z^t \right) \right\rangle\right) \\
& = \frac{1}{\theta_1} \left( \frac{\bar{L}}{2\eta \theta_1} \left\Vert v^{t+1} - x^t \right\Vert^2 + \left\langle g^t, v^{t+1} - x^t \right\rangle\right) \\
& = \frac{1}{\theta_1} \left( \frac{\bar{L}}{2\eta \theta_1} \left\Vert v^{t+1} - x^t \right\Vert^2 + \left\langle \nabla F (x^t), v^{t+1} - x^t \right\rangle + \left\langle g^t-\nabla F (x^t), v^{t+1} - x^t \right\rangle\right) \\
& = \frac{1}{\theta_1} \left( \frac{\bar{L}}{2} \left\Vert v^{t+1} - x^t \right\Vert^2 + \left\langle \nabla F (x^t), v^{t+1} - x^t \right\rangle + \frac{\bar{L}}{2} \left( \frac{1}{\eta \theta_1} - 1\right) \left\Vert v^{t+1} - x^t \right\Vert^2 + \left\langle g^t-\nabla F (x^t), v^{t+1} - x^t \right\rangle\right) \\
& \geq \frac{1}{\theta_1} \left( F (v^{t+1}) - F(x^t) + \frac{\bar{L}}{2} \left( \frac{1}{\eta \theta_1} - 1\right) \left\Vert v^{t+1} - x^t \right\Vert^2 + \left\langle g^t-\nabla F (x^t), v^{t+1} - x^t \right\rangle\right),
\end{align*}
where the last inequality follows Lemma~\ref{lemma:F-cvx-smooth} and Lemma~\ref{lemma:prop-cvx-L}.
By Young's inequality, $\langle a,b \rangle \geq - \frac{ \Vert a \Vert^2 }{2 \beta} - \frac{\beta \Vert b \Vert^2}{2}$ with $\beta= \frac{\eta \theta_1}{ \bar{L} (1-\eta \theta_1) }$, we have
\begin{align*}
& \frac{\bar{L}}{2\eta} \left\Vert z^{t+1} - z^t  \right\Vert^2 + \left\langle g^t, z^{t+1} - z^t \right\rangle \\
& \geq \frac{1}{\theta_1} \left( F (v^{t+1}) - F(x^t) + \frac{\bar{L}}{2} \left( \frac{1}{\eta \theta_1} - 1\right) \left\Vert v^{t+1} - x^t \right\Vert^2 - \frac{\eta \theta_1}{2 \bar{L} (1-\eta \theta_1) }  \left\Vert g^t-\nabla F (x^t) \right\Vert^2 \right. \\ 
& \qquad \left. - \frac{\bar{L}}{2} \left( \frac{1}{\eta \theta_1} - 1\right) \left\Vert v^{t+1} - x^t \right\Vert^2 \right) \\
& = \frac{1}{\theta_1} \left( F (v^{t+1}) - F(x^t) \right) - \frac{\eta}{2 \bar{L} (1-\eta \theta_1) }  \left\Vert g^t-\nabla F (x^t) \right\Vert^2 .
\end{align*}

\end{proof}

\end{document}